\newcounter{lem_counter}
\newcounter{pro_counter}
\newcounter{ass_counter}
\newcounter{rmk_counter}
\newtheorem{proposition}[pro_counter]{Proposition}
\newtheorem{lemma}[lem_counter]{Lemma}
\newtheorem{assumption}[ass_counter]{Assumption}
\newtheorem{remark}[rmk_counter]{Remark}
\newcommand{\E}{\mathbb{E}}
\newcommand{\R}{\mathbb{R}}
\newcommand{\V}{\mathbb{V}}
\title{Mean-Variance Policy Iteration for Risk-Averse Reinforcement Learning}
\author {
        Shangtong Zhang,\textsuperscript{\rm 1}
        Bo Liu, \textsuperscript{\rm 2}
        Shimon Whiteson \textsuperscript{\rm 1} \\
}
\begin{document}

\maketitle

\begin{abstract}
We present a mean-variance policy iteration (MVPI) framework for risk-averse control in a discounted infinite horizon MDP optimizing the variance of a per-step reward random variable.
MVPI enjoys \textit{great flexibility} in that
any policy evaluation method and risk-neutral control method can be dropped in for risk-averse control off the shelf,
in both on- and off-policy settings.
This flexibility reduces the gap between risk-neutral control and risk-averse control and is achieved by working on a novel augmented MDP directly.
We propose risk-averse TD3 as an example instantiating MVPI,
which outperforms vanilla TD3 and many previous risk-averse control methods in challenging Mujoco robot simulation tasks under a risk-aware performance metric.
This risk-averse TD3 is the first to introduce deterministic policies and off-policy learning into risk-averse reinforcement learning,
both of which are key to the performance boost we show in Mujoco domains.
\end{abstract}

\section{Introduction}
\label{sec:intro}
One fundamental task in reinforcement learning (RL, \citealt{sutton2018reinforcement}) is control,
in which we seek a policy that maximizes certain performance metrics.
In risk-neutral RL,
the performance metric is usually the expectation of some random variable,
for example, the expected total (discounted or undiscounted) reward \citep{puterman2014markov,sutton2018reinforcement}.
We, however, 
sometimes want to minimize certain risk measures of that random variable while maximizing its expectation. 
For example,
a portfolio manager usually wants to reduce the risk of a portfolio while maximizing its return.
Risk-averse RL is a framework for studying such problems.

Although many real-world applications can potentially benefit from risk-averse RL, e.g., pricing \citep{wang2000class}, 
healthcare \citep{parker2009managing}, 
portfolio management \citep{lai2011mean},  autonomous driving \citep{maurer2016autonomous}, and robotics \citep{majumdar2020should},
the development of risk-averse RL largely falls behind risk-neutral RL.
Risk-neutral RL methods have enjoyed superhuman performance in many domains, e.g., Go \citep{silver2016mastering}, protein design \citep{senior2018alphafold}, and StarCraft II \citep{vinyals2019grandmaster},
while no human-level performance has been reported for risk-averse RL methods in real-world applications.
Risk-neutral RL methods have enjoyed stable off-policy learning \citep{watkins1992q,maei2011gradient,fujimoto2018addressing,haarnoja2018soft},
while state-of-the-art risk-averse RL methods, e.g., \citet{liu2018block,papini:risk2019}, still require on-policy samples.
Risk-neutral RL methods have exploited deep neural network function approximators and distributed training \citep{mnih2016asynchronous,espeholt2018impala},
while tabular and linear methods still dominate the experiments of risk-averse RL literature \citep{tamar2012policy,prashanth2013actor,liu2018block,chow2018risk}.
Such a big gap between risk-averse RL and risk-neutral RL gives rise to a natural question:
\emph{can we design a meta algorithm that can easily leverage recent advances in risk-neutral RL for risk-averse RL?}
In this paper, we give an affirmative answer via the mean-variance policy iteration (MVPI) framework.

Although many risk measures have been used in risk-averse RL,
in this paper, 
we mainly focus on variance \citep{sobel1982variance,mannor2011mean,tamar2012policy,prashanth2013actor,liu2018block} given its advantages in interpretability and computation \citep{markowitz2000mean,li2000optimal}.
Such an RL paradigm is usually referred to as mean-variance RL,
and previous mean-variance RL methods usually consider the variance of the total reward random variable \citep{tamar2012policy,prashanth2013actor,liu2018block}.
Recently,
\citet{papini:risk2019} propose a reward-volatility risk measure that considers the variance of a per-step reward random variable.
\citet{papini:risk2019} show that the variance of the per-step reward can better capture the short-term risk than the variance of the total reward and usually leads to smoother trajectories.

In complicated environments with function approximation,
pursuing the exact policy that minimizes the variance of the total reward is usually intractable.
In practice, 
all we can hope is to lower the variance of the total reward to a certain level.
As the variance of the per-step reward bounds the variance of the total reward from above \citep{papini:risk2019},
in this paper, 
we propose to optimize the variance of the per-step reward as a proxy for optimizing the variance of the total reward.
Though the policy minimizing the variance of the per-step reward does not necessarily minimize the variance of the total reward,
in this paper,
we show, 
via MVPI,
that optimizing the variance of the per-step reward as a proxy is more efficient and scalable than optimizing the variance of the total reward directly.

MVPI enjoys great flexibility in that
\emph{any policy evaluation method and risk-neutral control method can be dropped in for risk-averse control off the shelf,
in both on- and off-policy settings}.
Key to the flexibility of MVPI is 
that it works on an augmented MDP directly,
which we make possible 
by introducing the Fenchel duality and block cyclic coordinate ascent to solve a policy-dependent reward issue \citep{papini:risk2019}.
This issue refers to a requirement to solve an MDP whose reward function depends on the policy being followed,
i.e.,
the reward function of this MDP is nonstationary.
Consequently,
standard tools from the MDP literature are not applicable.
We propose risk-averse TD3 as an example instantiating MVPI,
which outperforms vanilla TD3 \citep{fujimoto2018addressing} and many previous mean-variance RL methods \citep{tamar2012policy,prashanth2013actor,liu2018block,papini:risk2019} in challenging Mujoco robot simulation tasks in terms of a risk-aware performance metric.
To the best of our knowledge, 
we are the first to benchmark mean-variance RL methods in Mujoco domains,
a widely used benchmark for robotic-oriented RL research,
and the first to bring off-policy learning and deterministic policies into mean-variance RL.

\section{Mean-Variance RL}
\label{sec:bg}
We consider an infinite horizon MDP with a state space $\mathcal{S}$, an action space $\mathcal{A}$, 
a bounded reward function $r: \mathcal{S} \times \mathcal{A} \rightarrow \R$,
a transition kernel $p: \mathcal{S} \times \mathcal{S} \times \mathcal{A} \rightarrow [0, 1]$, an initial distribution $\mu_0: \mathcal{S} \rightarrow [0, 1]$, and a discount factor $\gamma \in [0, 1]$.
The initial state $S_0$ is sampled from $\mu_0$.
At time step $t$, an agent takes an action $A_t$ according to $\pi(\cdot | S_t)$, where $\pi: \mathcal{A} \times \mathcal{S} \rightarrow [0, 1]$ is the policy followed by the agent.
The agent then gets a reward $R_{t+1} \doteq r(S_t, A_t)$ and proceeds to the next state $S_{t+1}$ according to $p(\cdot | S_t, A_t)$.
In this paper, we consider a deterministic reward setting for the ease of presentation, following \citet{chow2017risk,liu2018block}.
The return at time step $t$ is defined as
$G_t \doteq \sum_{i=0}^{\infty} \gamma^{i} r(S_{t+i}, A_{t+i})$.
When $\gamma < 1$, 
$G_t$ is always well defined.
When $\gamma = 1$, 
to ensure $G_t$ remains well defined,
it is usually assumed that all polices are proper \citep{bertsekas1996neuro},
i.e.,
for any policy $\pi$, 
the chain induced by $\pi$ has some absorbing states, one of which the agent will eventually go to with probability 1.
Furthermore, the rewards are always 0 thereafter. 
For any $\gamma \in [0, 1]$, $G_0$ is the random variable indicating the total reward,
and we use its expectation 
\begin{align}
J(\pi) \doteq \E_{\mu_0, p, \pi}[G_0],
\end{align}
as our primary performance metric.
In particular, when $\gamma = 1$, we can express $G_0$ as $G_0 = \sum_{t=0}^{T-1} r(S_t, A_t)$,
where $T$ is a random variable indicating the first time the agent goes to an absorbing state.
For any $\gamma \in [0, 1]$, 
the state value function and the state-action value function are defined as $v_\pi(s) \doteq \E[G_t | S_t = s]$
and $q_\pi(s, a) \doteq \E[G_t | S_t = s, A_t = a]$ respectively.

\textbf{Total Reward Perspective.}
Previous mean-variance RL methods \citep{prashanth2013actor,tamar2012policy,liu2018block} usually consider the variance of the total reward.
Namely, they consider the following problem:
\begin{align}
\label{pb:original}
\textstyle{\max_\theta \E[G_0] \quad \text{subject to} \quad \V(G_0) \leq \xi},
\end{align}
where $\V(\cdot)$ indicates the variance of a random variable, $\xi$ indicates the user's tolerance for variance, and $\pi$ is parameterized by $\theta$. 
In particular, \citet{prashanth2013actor} consider the setting $\gamma < 1$ and convert \eqref{pb:original} into an unconstrained saddle-point problem:
$\textstyle{\max_\lambda \min_\theta L_1(\theta, \lambda) \doteq -\E[G_0] + \lambda (\V(G_0) - \xi)}$,
where $\lambda$ is the dual variable.
\cite{prashanth2013actor} use stochastic gradient descent to find the saddle-point of $L_1(\theta, \lambda)$.
To estimate $\nabla_{\theta, \lambda} L_1(\theta, \lambda)$, 
they propose two simultaneous perturbation methods:
simultaneous perturbation stochastic approximation and smoothed functional \citep{Bhatnagar_2013},
yielding a three-timescale algorithm.
Empirical success is observed in a simple traffic control MDP.

\citet{tamar2012policy} consider the setting $\gamma = 1$.
Instead of using the saddle-point formulation in \citet{prashanth2013actor},
they consider the following unconstrained problem: 
$\textstyle{\max_\theta L_2(\theta) \doteq \E[G_0] - \lambda g(\V(G_0) - \xi)}$,
where $\lambda > 0$ is a hyperparameter to be tuned and $g(\cdot)$ is a penalty function,
which they define as
$g(x) \doteq (\max\{0, x\})^2$.
The analytical expression of $\nabla_\theta L_2(\theta)$ they provide involves a term $\E[G_0] \nabla_\theta \E[G_0]$.
To estimate this term,
\citet{tamar2012policy} consider a two-timescale algorithm and keep running estimates for $\E[G_0]$ and $\V[G_0]$ in a faster timescale, yielding an episodic algorithm. 
Empirical success is observed in a simple portfolio management MDP.

\citet{liu2018block} consider the setting $\gamma = 1$ and set the penalty function $g(\cdot)$ in \citet{tamar2012policy} to the identity function.
With the Fenchel duality $x^2 = \max_y (2xy - y^2)$,
they transform the original problem into
$\textstyle{\max_{\theta, y} L_3(\theta, y) \doteq  2y(\E[G_0] + \frac{1}{2\lambda}) - y^2 - \E[G_0^2]}$,
where $y$ is the dual variable.
\citet{liu2018block} then propose a solver based on stochastic coordinate ascent,
yielding an episodic algorithm.



\textbf{Per-Step Reward Perspective.}
Recently \citet{papini:risk2019} propose a reward-volatility risk measure, 
which is the variance of a per-step reward random variable $R$.
In the setting $\gamma < 1$,
it is well known that the expected total discounted reward can be expressed as
\begin{align}
\textstyle{J(\pi) = \frac{1}{1 - \gamma} \sum_{s, a} d_\pi(s, a) r(s, a)},
\label{eq:Jpi_disc}
\end{align}
where $d_\pi(s, a)$ is the normalized discounted state-action distribution: 
\begin{align}
\textstyle{d_\pi(s, a) \doteq (1 - \gamma) \sum_{t=0}^\infty \gamma^t \Pr(S_t = s, A_t = a | \mu_0, \pi, p)}.
\end{align}
We now define the per-step reward random variable $R$, 
a discrete random variable taking values in the image of $r$,
by defining its probability mass function as $p(R = x) = \sum_{s, a} d_\pi(s, a)\mathbb{I}_{r(s, a) = x}$,
where $\mathbb{I}$ is the indicator function. 
It follows that
$\textstyle{\E[R] = (1 - \gamma) J(\pi)}$.
\citet{papini:risk2019} argue that $\V(R)$ can better capture short-term risk than $\V(G_0)$ and optimizing $\V(R)$ usually leads to smoother trajectories than optimizing $\V(G_0)$, among other advantages of this risk measure.
\citet{papini:risk2019}, therefore, consider the following objective:
\begin{align}
\label{eq:mvpi-obj}
J_\lambda(\pi) \doteq \E[R] - \lambda \V(R).
\end{align}
\citet{papini:risk2019} show that $J_\lambda(\pi) = \E[R- \lambda (R-\E[R])^2]$,
i.e., to optimize the risk-aware objective $J_\lambda(\pi)$ is to optimize the canonical risk-neutral objective of a new MDP, 
which is the same as the original MDP except that the new reward function is $$r^\prime(s, a) \doteq r(s, a) - \lambda \big( r(s, a) - (1 - \gamma)J(\pi) \big)^2.$$
Unfortunately,
this new reward function $r^\prime$ depends on the policy $\pi$ due to the occurrence of $J(\pi)$,
implying the reward function is actually nonstationary.
By contrast,
in canonical RL settings (e.g., \citet{puterman2014markov,sutton2018reinforcement}),
the reward function is assumed to be stationary.
We refer to this problem as the \emph{policy-dependent-reward} issue.
Due to this issue, the rich classical MDP toolbox cannot be applied to this new MDP easily, and the approach of \citet{papini:risk2019} \emph{does not and cannot} work on this new MDP directly. 

\citet{papini:risk2019} instead work on the objective Eq~\eqref{eq:mvpi-obj} directly \emph{without} resorting to the augmented MDP.
They propose to optimize a performance lower bound of $J_\lambda(\pi)$ by extending the performance difference theorem (Theorem 1 in \citet{schulman2015trust}) from the risk-neutral objective $J(\pi)$ to the risk-aware objective $J_\lambda(\pi)$,
yielding the Trust Region Volatility Optimization (TRVO) algorithm,
which is similar to Trust Region Policy Optimization  \citep{schulman2015trust}.

Importantly, \citet{papini:risk2019} show that $\V(G_0) \leq \frac{\V(R)}{(1 - \gamma)^2}$, 
indicating that minimizing the variance of $R$ implicitly minimizes the variance of $G_0$. 
We, 
therefore,
can optimize $\V(R)$ as a proxy (upper bound) for optimizing $\V(G_0)$.
In this paper, we argue that $\V(R)$ is easier to optimize than $\V(G_0)$. 
The methods of \citet{tamar2012policy,liu2018block} optimizing $\V(G_0)$ involve terms like $(\E[G_0])^2$ and $\E[G_0^2]$,
which lead to terms like $G_0^2\sum_{t=0}^{T-1} \nabla_\theta \log \pi(A_t | S_t)$ in their update rules,
yielding large variance.
In particular, it is computationally prohibitive to further expand $G_0^2$ explicitly to apply variance reduction techniques like baselines \citep{williams1992simple}.
By contrast, we show in the next section that by considering $\V(R)$, MVPI involves only $r(s, a)^2$,
which is easier to deal with than $G_0^2$.

\section{Mean-Variance Policy Iteration}
Although in many problems our goal is to maximize the expected total undiscounted reward,
practitioners often find that optimizing the discounted objective ($\gamma < 1$) as a proxy for the undiscounted objective ($\gamma = 1$) is  better than optimizing the undiscounted objective directly,
especially when deep neural networks are used as function approximators \citep{mnih2015human,lillicrap2015continuous,espeholt2018impala,xu2018meta,van2019using}.
We, therefore, focus on the discounted setting in the paper,
which allows us to consider optimizing the variance of the per-step reward as a proxy (upper bound) for optimizing the variance of the total reward.

To address the policy-dependent reward issue,
we use the Fenchel duality to rewrite $J_\lambda(\pi)$ as
\begin{align}
\label{eq:obj}
J_\lambda(\pi) &= \E[R] - \lambda \E[R^2] + \lambda (\E[R])^2\\
&= \E[R] - \lambda \E[R^2] + \lambda \max_y \big( 2\E[R]y - y^2 \big),
\end{align} 
yielding the following problem:
\begin{align}
\label{pb:per-step-reward}
&\max_{\pi, y} J_\lambda(\pi, y) \\
\doteq &\textstyle{\sum_{s, a} d_\pi(s, a) \big(r(s, a) - \lambda r(s, a)^2 + 2\lambda r(s, a) y \big) - \lambda y^2}.
\end{align}
We then propose a \emph{block cyclic coordinate ascent} (BCCA, \citealt{luenberger1984linear,tseng2001convergence,saha2010finite,saha2013nonasymptotic,wright2015coordinate})
framework to solve \eqref{pb:per-step-reward},
which updates $y$ and $\pi$ alternatively as shown in Algorithm~\ref{alg:mvpi}.
\begin{algorithm}
\For{k = 1, \dots}{
\textbf{Step 1}: $y_{k+1} \doteq (1 - \gamma) J(\pi_k)$ \tcp{The exact solution for $\arg {\max _y}J_\lambda({\pi_k},y)$} 
\textbf{Step 2}: $\pi_{k + 1} \doteq \arg\max_\pi \Big( \sum_{s, a} d_\pi(s, a) \big(r(s,a) - \lambda r{(s,a)^2} + 2\lambda r(s,a){y_{k + 1}}\big) - \lambda y_{k+1}^2 \Big)$ \;
}
\caption{\label{alg:mvpi}Mean-Variance Policy Iteration (MVPI)}
\end{algorithm}
At the $k$-th iteration,
we first fix $\pi_k$ and update $y_{k+1}$ (Step 1).
As $J_\lambda(\pi_k, y)$  is quadratic in $y$,
$y_{k+1}$ can be computed analytically as
$y_{k+1} = \sum_{s, a}d_{\pi_k}(s, a)r(s, a) = (1 - \gamma) J(\pi_k)$,
i.e., all we need in this step is $J(\pi_k)$,
which is exactly the performance metric of the policy $\pi_k$.
We, therefore,
refer to Step 1 as \emph{policy evaluation}.
We then fix $y_{k+1}$ and update $\pi_{k+1}$ (Step 2).
Remarkably, Step 2 can be reduced to the following problem:
\begin{align}
\textstyle{ {\pi _{k + 1}} = \arg {\max _\pi } \sum_{s, a} d_\pi(s, a) \hat{r}(s, a; y_{k+1})},
\end{align}
where $\hat{r}(s, a; y) \doteq r(s, a) - \lambda r(s, a)^2 + 2\lambda r(s, a) y$.
In other words, 
to compute $\pi_{k+1}$, 
we need to solve a new MDP, 
which is the same as the original MDP except that the reward function is $\hat{r}$ instead of $r$.
This new reward function $\hat{r}$ does not depend on the policy $\pi$,
avoiding the policy-dependent-reward issue of \citet{papini:risk2019}.
In this step, a new policy $\pi_{k+1}$ is computed.
An intuitive conjecture is that this step is a \emph{policy improvement} step,
and we confirm this with the following proposition:
\begin{proposition}
\label{lem:mvpi}
({Monotonic Policy Improvement})\\ $\forall k, J_\lambda(\pi_{k+1}) \geq J_\lambda(\pi_k)$.
\end{proposition}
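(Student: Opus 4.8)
The plan is to use the variational representation $J_\lambda(\pi) = \max_y J_\lambda(\pi, y)$ together with the two defining properties of the updates in Algorithm~\ref{alg:mvpi}: Step~1 returns the exact $y$-maximizer for the fixed policy $\pi_k$, and Step~2 returns the exact $\pi$-maximizer for the fixed dual variable $y_{k+1}$. These are then combined into the short chain of inequalities that underlies the convergence argument for block coordinate ascent.

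First I would record that $J_\lambda(\pi) = \max_y J_\lambda(\pi, y)$. This is immediate from the Fenchel identity in \eqref{eq:obj} and the definition of $J_\lambda(\pi, y)$ in \eqref{pb:per-step-reward}: writing $\E[R] = \sum_{s, a} d_\pi(s, a) r(s, a)$ and $\E[R^2] = \sum_{s, a} d_\pi(s, a) r(s, a)^2$, we have $J_\lambda(\pi, y) = \E[R] - \lambda \E[R^2] + 2\lambda \E[R] y - \lambda y^2$, a concave quadratic in $y$ maximized at $y^\star = \E[R] = (1 - \gamma) J(\pi)$, with maximal value $\E[R] - \lambda \E[R^2] + \lambda (\E[R])^2 = J_\lambda(\pi)$. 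In particular $y_{k+1} = (1 - \gamma) J(\pi_k)$ is exactly the $y$-argmax for $\pi_k$, hence $J_\lambda(\pi_k, y_{k+1}) = J_\lambda(\pi_k)$.

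The proposition then follows from the chain
\[
J_\lambda(\pi_{k+1}) = \max_y J_\lambda(\pi_{k+1}, y) \ge J_\lambda(\pi_{k+1}, y_{k+1}) \ge J_\lambda(\pi_k, y_{k+1}) = J_\lambda(\pi_k),
\]
where the first inequality simply drops the maximization over $y$, the second holds because $\pi_{k+1}$ is by definition the maximizer of $\pi \mapsto J_\lambda(\pi, y_{k+1})$ in Step~2 and therefore does at least as well as $\pi_k$, and the last equality is the observation from the previous paragraph. The only point requiring care is that the two argmaxes are actually attained (or, in the approximate/sampled setting, that the inequalities degrade only by the optimization error): for Step~1 this is automatic since the objective is a one-dimensional concave quadratic, and for Step~2 it reduces to solving the stationary-reward MDP with reward $\hat r(\cdot, \cdot; y_{k+1})$, which admits an optimal policy under the usual assumptions on $\mathcal{S}$ and $\mathcal{A}$. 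I do not expect any genuine obstacle beyond making this existence statement precise; everything else is the routine block coordinate ascent monotonicity argument.
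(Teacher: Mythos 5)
Your proof is correct and follows essentially the same argument as the paper: the paper's proof is exactly your chain $J_\lambda(\pi_{k+1}) \ge J_\lambda(\pi_{k+1}, y_{k+1}) \ge J_\lambda(\pi_k, y_{k+1}) = J_\lambda(\pi_k)$, written out in explicit summation form, using that $y_{k+1}$ maximizes the concave quadratic in $y$ for $\pi_k$ and that $\pi_{k+1}$ maximizes the augmented-reward objective at $y_{k+1}$. No substantive difference.
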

Though the monotonic improvement w.r.t.\ the objective $J_\lambda(\pi, y)$ in Eq~\eqref{pb:per-step-reward} follows directly from standard BCCA theories,
Theorem~\ref{lem:mvpi} provides the monotonic improvement w.r.t.\ the objective $J_\lambda(\pi)$ in Eq~\eqref{eq:obj}.
The proof is provided in the appendix.
Given Theorem \ref{lem:mvpi}, 
we can now consider the whole BCCA framework in Algorithm~\ref{alg:mvpi} as a policy iteration framework,
which we call 
\emph{mean-variance policy iteration} (MVPI).
Let $\{\pi_\theta: \theta \in \Theta\}$ be the function class for policy optimization, 
we have
\begin{assumption}
\label{assu:compact}
$\{\theta \in \Theta, y \in \R \mid J_\lambda(\theta, y) \geq J_\lambda(\theta_0)\}$ is compact,
where $\theta_0$ is the initial parameters.
\end{assumption}
\begin{assumption}
\label{assu:params}
$\sup_{\theta \in \Theta} ||\frac{\partial \log \pi_\theta(a|s)}{\partial \theta_i \partial \theta_j} || < \infty$, 
$\sup_{\theta \in \Theta} || \nabla_\theta \log \pi_\theta(a|s) || < \infty$, 
$\Theta$ is open and bounded.
\end{assumption}
\begin{proposition}
\label{thm:mvpi}
(Convergence of MVPI with function approximation)
Under Assumptions \ref{assu:compact} \& \ref{assu:params}, let
\begin{align}
y_{k+1} &\doteq \arg\max_{y} J_\lambda({\theta_k}, y), \\
\theta_{k+1} &\doteq \arg\max_{\theta \in \Theta} J_\lambda (\theta, y_{k+1}), \quad k = 0, 1, \dots
\end{align}
then $J_\lambda(\theta_{k+1}) \geq J_\lambda(\theta_{k})$,
$\{J_\lambda(\theta_k)\}_{k = 1, \dots}$ converges,
and $\lim \inf_{k} ||\nabla_\theta J_\lambda(\theta_k)|| = 0$.
\end{proposition}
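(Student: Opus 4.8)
The plan is to read Algorithm~\ref{alg:mvpi}, in the form stated in this proposition, as exact two-block cyclic coordinate ascent on the joint objective $J_\lambda(\theta,y)$ of Eq~\eqref{pb:per-step-reward}, and then to push the classical sufficient-increase analysis of coordinate ascent through the envelope $J_\lambda(\theta)=\max_y J_\lambda(\theta,y)$ of Eq~\eqref{eq:obj}. Throughout I will use the closed form $J_\lambda(\theta,y)=\sum_{s,a}d_{\pi_\theta}(s,a)\big(r(s,a)-\lambda r(s,a)^2+2\lambda r(s,a)y\big)-\lambda y^2$, so that the $\theta$-dependence enters only through the discounted occupancy $d_{\pi_\theta}$, which is continuous in $\theta$ (and, under Assumption~\ref{assu:params}, $C^1$). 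Monotonicity is then immediate from the one-line chain
\[
J_\lambda(\theta_{k+1})=\max_y J_\lambda(\theta_{k+1},y)\ \ge\ J_\lambda(\theta_{k+1},y_{k+1})\ \ge\ J_\lambda(\theta_k,y_{k+1})=\max_y J_\lambda(\theta_k,y)=J_\lambda(\theta_k),
\]
where the second inequality holds because $\theta_{k+1}$ maximizes $J_\lambda(\cdot,y_{k+1})$ over $\Theta$ and the last equalities because $y_{k+1}$ maximizes $J_\lambda(\theta_k,\cdot)$; this is the same argument that underlies Proposition~\ref{lem:mvpi}.

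Next I would establish convergence of $\{J_\lambda(\theta_k)\}$ and set up the compact sets needed afterwards. From monotonicity, $J_\lambda(\theta_k)\ge J_\lambda(\theta_0)$, so each pair $(\theta_k,y_{k+1})$ satisfies $J_\lambda(\theta_k,y_{k+1})=J_\lambda(\theta_k)\ge J_\lambda(\theta_0)$ and hence lies in the compact set of Assumption~\ref{assu:compact}; by continuity of $J_\lambda$ the nondecreasing sequence $\{J_\lambda(\theta_k)\}$ is bounded above and converges to some finite $J^\star$. Projecting that compact set onto the $\theta$-coordinates yields a compact $K\subset\Theta$ containing every $\theta_k$; since $\Theta$ is open and bounded, $\Theta^{c}$ is nonempty and closed, so $\rho\doteq\mathrm{dist}(K,\Theta^{c})\in(0,\infty)$. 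Finally, because $\gamma<1$ and $r$ is bounded, $|y_{k+1}|=|(1-\gamma)J(\pi_{\theta_k})|\le\sup_{s,a}|r(s,a)|$, so every $y$-value encountered lies in a fixed bounded interval $Y$.

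For the gradient claim, the key observation is that $y\mapsto J_\lambda(\theta,y)$ is strictly concave (coefficient $-\lambda<0$ on $y^2$) with unique maximizer $y^\star(\theta)=(1-\gamma)J(\pi_\theta)$, so by the envelope theorem — or by differentiating $J_\lambda(\theta)=\sum_{s,a}d_{\pi_\theta}(s,a)(r-\lambda r^2)+\lambda\big(\sum_{s,a}d_{\pi_\theta}(s,a)r\big)^2$ directly — we have $\nabla_\theta J_\lambda(\theta_k)=\left.\nabla_\theta J_\lambda(\theta,y_{k+1})\right|_{\theta=\theta_k}\doteq g_k$. Using Assumption~\ref{assu:params} (bounded first and second derivatives of $\log\pi_\theta$ together with bounded $\Theta$, which through the policy-gradient and policy-Hessian formulas make $\theta\mapsto\nabla_\theta\!\sum_{s,a}d_{\pi_\theta}(s,a)(\cdot)$ Lipschitz) and boundedness of $y$ over $Y$, one shows $\theta\mapsto J_\lambda(\theta,y)$ is $L$-smooth on the $\rho$-neighborhood of $K$, with $L$ uniform over $y\in Y$. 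Applying the descent inequality along the segment from $\theta_k$ in direction $g_k$ with step $t\doteq\min\{1/L,\ \rho/(2\|g_k\|)\}$ (chosen so the segment stays in $\Theta$) produces a point $\bar\theta\in\Theta$ with $J_\lambda(\bar\theta,y_{k+1})-J_\lambda(\theta_k,y_{k+1})\ge\phi(\|g_k\|)$ for a forcing function $\phi$ (nondecreasing, $\phi(a)>0$ for $a>0$; explicitly $\phi(a)=c\min\{a^2/L,\rho a\}$ for a universal $c>0$). Since $\theta_{k+1}$ maximizes $J_\lambda(\cdot,y_{k+1})$ over $\Theta\ni\bar\theta$ and $J_\lambda(\theta_k)=J_\lambda(\theta_k,y_{k+1})$,
\[
J_\lambda(\theta_{k+1})-J_\lambda(\theta_k)\ \ge\ J_\lambda(\theta_{k+1},y_{k+1})-J_\lambda(\theta_k,y_{k+1})\ \ge\ \phi\big(\|\nabla_\theta J_\lambda(\theta_k)\|\big),
\]
so telescoping and $J_\lambda(\theta_k)\to J^\star<\infty$ give $\sum_k\phi(\|\nabla_\theta J_\lambda(\theta_k)\|)<\infty$, whence $\phi(\|\nabla_\theta J_\lambda(\theta_k)\|)\to 0$ and, as $\phi$ is a forcing function, $\|\nabla_\theta J_\lambda(\theta_k)\|\to 0$ — in particular $\liminf_k\|\nabla_\theta J_\lambda(\theta_k)\|=0$.

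The main obstacle is the uniform $L$-smoothness of $J_\lambda(\cdot,y)$ near the iterates: it requires differentiating the occupancy-gradient $\nabla_\theta\!\sum_{s,a}d_{\pi_\theta}(s,a)(\cdot)$ once more and controlling the resulting second-order terms through Assumption~\ref{assu:params}, uniformly in $y\in Y$ — and this uniformity is exactly where boundedness of $r$ (hence of the $y_{k+1}$) is used. A second, milder point, already handled above, is that $\Theta$ is merely open and bounded rather than compact, so the gradient-step point $\bar\theta$ could otherwise escape $\Theta$; extracting the compact $K\subset\Theta$, using $\rho=\mathrm{dist}(K,\Theta^{c})>0$, and capping the step at $\rho/(2\|g_k\|)$ keeps $\bar\theta\in\Theta$ at the cost of only an innocuous constant in $\phi$. (A softer subsequential argument also yields $\liminf_k\|\nabla_\theta J_\lambda(\theta_k)\|=0$ without the smoothness estimate, but it needs continuity of $y\mapsto\max_{\theta\in\Theta}J_\lambda(\theta,y)$, which is itself delicate because the outer maximization is over the non-compact $\Theta$.)
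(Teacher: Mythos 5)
Your proposal is correct, but it reaches the conclusion by a genuinely different route than the paper. The paper's proof keeps the same two ingredients you use at the start and end — the monotonicity chain from Proposition~\ref{lem:mvpi} and the Danskin/Fenchel identification $\nabla_\theta J_\lambda(\theta_k) = \nabla_\theta J_\lambda(\theta_k, y_{k+1})$ — but for stationarity it extracts a convergent subsequence of the iterates (via compactness of the level set $\Theta_0$, established with the Lipschitz continuity of $J(\theta)$ from Lemma~\ref{lem:lipschitz}) and then appeals to Theorem 4.1(c) of Tseng (2001), which says cluster points of two-block cyclic coordinate ascent are coordinatewise stationary; Danskin then converts joint stationarity into $\nabla_\theta J_\lambda = 0$ at the cluster point, giving exactly the stated $\liminf$. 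You instead avoid Tseng entirely and run a self-contained sufficient-increase argument: uniform $L$-smoothness of $\theta \mapsto J_\lambda(\theta, y)$ over the bounded range of $y_{k+1}$ (which follows from Assumption~\ref{assu:params} by the same Hessian bound of \citet{papini2018stochastic} that the paper's Lemma~\ref{lem:lipschitz} cites), a gradient step capped by the distance $\rho$ from the compact set of iterates to $\partial\Theta$ so it stays in the open domain, the exactness of the $\theta$-block maximization to dominate that step, and telescoping against the convergent sequence $\{J_\lambda(\theta_k)\}$. Your handling of the level-set projection, the bound $|y_{k+1}|\le \sup_{s,a}|r(s,a)|$, and the forcing-function step are all sound. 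What each approach buys: the paper outsources the analytic work to a known BCCA theorem (and has to discuss why the uniqueness hypotheses on block maximizers are not needed with two blocks), while your argument requires proving the uniform smoothness estimate and the boundary-distance bookkeeping but yields the strictly stronger conclusion $\|\nabla_\theta J_\lambda(\theta_k)\| \to 0$ rather than only $\liminf_k \|\nabla_\theta J_\lambda(\theta_k)\| = 0$, and it makes the per-iteration improvement quantitative.
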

\begin{remark}
Assumption~\ref{assu:compact} is standard in BCCA literature (e.g., Theorem 4.1 in \citet{tseng2001convergence}).
Assumption~\ref{assu:params} is standard in policy optimization literature (e.g., Assumption 4.1 in \citet{papini2018stochastic}). 
Convergence in the form of $\lim\inf$ also appears in other literature (e.g., \citet{luenberger1984linear,tseng2001convergence,konda2002thesis,zhang2019provably}).
\end{remark}
The proof is provided in the appendix.
MVPI enjoys great flexibility in that any policy evaluation method and risk-neutral control method can be dropped in off the shelf,
which makes it possible to leverage all the advances in risk-neutral RL.
MVPI differs from the standard policy iteration (PI, e.g., see \citet{bertsekas1996neuro,puterman2014markov,sutton2018reinforcement}) in two key ways: 
\textbf{(1)}  policy evaluation in MVPI requires only a scalar performance metric,  
while standard policy evaluation involves computing the value of all states.
\textbf{(2)}  policy improvement in MVPI considers an augmented reward $\hat{r}$,
which is different at each iteration,
while standard policy improvement always considers the original reward.
Standard PI can be used to solve the policy improvement step in MVPI.

\subsection{Average Reward Setting}
So far we have considered the total reward as the primary performance metric for mean-variance RL.
We now show that MVPI can also be used when we consider the average reward as the primary performance metric.
Assuming the chain induced by $\pi$ is ergodic and letting $\bar{d}_\pi(s)$ be its stationary distribution, 
\citet{filar1989variance,prashanth2013actor} consider the \emph{long-run variance} risk measure
$\Lambda(\pi) \doteq \sum_{s, a}\bar{d}_\pi(s, a) \big( r(s,a) - \bar{J}(\pi) \big)^2$ for the average reward setting,
where $\bar{d}_\pi(s, a) \doteq \bar{d}_\pi(s) \pi(a | s)$ and $\bar{J}(\pi) = \sum_{s,a} \bar{d}_\pi(s, a) r(s, a)$ is the average reward.
We now define a risk-aware objective 
\begin{align}
\label{eq:mvpi-avg}
 \bar{J}_\lambda(\pi) &\doteq \bar{J}(\pi) - \lambda \Lambda(\pi) \\
&= \max_y \sum_{s, a} \bar{d}_\pi(s, a)\hat{r}(s, a; y) - \lambda y^2,
\end{align}
where we have used the Fenchel duality and BCCA can take over to derive MVPI for the average reward setting as Algorithm~\ref{alg:mvpi}.
It is not a coincidence that the only difference between \eqref{pb:per-step-reward} and \eqref{eq:mvpi-avg} is the difference between $d_\pi$ and $\bar{d}_\pi$.
The root cause is that the total discounted reward of an MDP is always equivalent to the average reward of an artificial MDP (up to a constant multiplier), whose transition kernel is $\tilde{p}(s^\prime | s, a) = \gamma p(s^\prime | s, a) + (1 - \gamma) \mu_0(s^\prime)$ (e.g., see Section 2.4 in \citet{konda2002thesis} for details). 

\subsection{Off-Policy Learning}
Off-policy learning has played a key role in improving  data efficiency \citep{lin1992self,mnih2015human} and exploration \citep{osband2016deep,osband2018randomized} in risk-neutral control algorithms.
Previous mean-variance RL methods, however, consider only the on-policy setting and cannot be easily made off-policy.
For example, it is not clear whether perturbation methods for estimating gradients \citep{prashanth2013actor} can be used off-policy.
To reweight terms like $G_0^2\sum_{t=0}^{T-1} \nabla_\theta \log \pi(A_t | S_t)$ from \citet{tamar2012policy,liu2018block} in the off-policy setting,
we would need to compute the product of importance sampling ratios $\Pi_{i=0}^{T-1}\frac{\pi(a_i|s_i)}{\mu(a_i|s_i)}$,
where $\mu$ is the behavior policy.
This product usually suffers from high variance \citep{precup2001off,liu2018breaking} and requires knowing the behavior policy $\mu$,
both of which are practical obstacles in real applications.
By contrast, 
as MVPI works on an augmented MDP directly,
any risk-neutral off-policy learning technique can be used for risk-averse off-policy control directly.
In this paper, we consider MVPI in both on-line and off-line off-policy settings.

\textbf{On-line setting}. In the on-line off-policy setting, 
an agent interacts with the environment following a behavior policy $\mu$ to collect transitions,
which are stored into a replay buffer \citep{lin1992self} for future reuse.
Mujoco robot simulation tasks \citep{brockman2016openai} are common benchmarks for this paradigm \citep{lillicrap2015continuous,haarnoja2018soft},
and TD3 is a leading algorithm in Mujoco tasks.
TD3 is a risk-neutral control algorithm,
reducing the over-estimation bias \citep{hasselt2010double} of DDPG \citep{lillicrap2015continuous},
which is a neural network implementation of 
the deterministic policy gradient theorem \citep{silver2014deterministic}.
Given the empirical success of TD3,
we propose MVPI-TD3 for risk-averse control in this setting.
In the policy evaluation step of MVPI-TD3, 
we set $y_{k+1}$ to the average of the recent $K$ rewards,
where $K$ is a hyperparameter to be tuned and we have assumed the policy changes slowly.
Theoretically, 
we should use a weighted average as $d_\pi(s, a)$ is a discounted distribution.
Though implementing this weighted average is straightforward,
practitioners usually ignore discounting for state visitation in policy gradient methods to improve sample efficiency \citep{mnih2016asynchronous,schulman2015trust,schulman2017proximal,bacon2017option}.
Hence, we do not use the weighted average in MVPI-TD3.
In the policy improvement step of MVPI-TD3,
we sample a mini-batch of transitions from the replay buffer and perform one TD3 gradient update.
The pseudocode of MVPI-TD3 is provided in the appendix.

\textbf{Off-line setting} In the off-line off-policy setting,
we are presented with a batch of transitions $\{s_i, a_i, r_i, s^\prime_i\}_{i=1,\dots,K}$
and want to learn a good target policy $\pi$ for control solely from this batch of transitions.
Sometimes those transitions are generated by following a known behavior policy $\mu$.
But more commonly, 
those transitions are generated from multiple unknown behavior policies,
which we refer to as the behavior-agnostic off-policy setting \citep{nachum2019dualdice}.
Namely, the state-action pairs $(s_i, a_i)$ are distributed according to some unknown distribution $d$,
which may result from multiple unknown behavior policies.
The successor state $s_i^\prime$ is distributed according to $p(\cdot|s_i, a_i)$ and $r_i = r(s, a)$.
The degree of off-policyness in this setting is usually larger than the on-line off-policy setting.

In the off-line off-policy setting,
the policy evaluation step in MVPI becomes the standard \emph{off-policy evaluation} problem (OPE, \citet{ope:thomas2015high,ope:thomas2016,ope:jiang2015doubly,liu2018breaking}),
where we want to estimate a scalar performance metric of a policy with off-line samples.
One promising approach to OPE is \textit{density ratio learning}, 
where we use function approximation to learn the density ratio $\frac{d_\pi(s, a)}{d(s, a)}$ directly, 
which we then use to reweight $r(s, a)$.
All off-policy evaluation algorithms can be integrated into MVPI in a plug-and-play manner.
In the off-line off-policy setting, 
the policy improvement step in MVPI becomes the standard \emph{off-policy policy optimization} problem,
where we can reweight the canonical on-policy actor-critic \citep{sutton2000policy,konda2002thesis} with the density ratio as in \citet{liu2019off} to achieve off-policy policy optimization. 
Algorithm~\ref{alg:off-policy-mvpi} provides an example of Off-line MVPI.

In the on-line off-policy learning setting, 
the behavior policy and the target policy are usually closely correlated (e.g., in MVPI-TD3),
we, therefore, do not need to learn the density ratio.
In the off-line off-policy learning setting,
the dataset may come from behavior policies that are arbitrarily different from the target policy.
We, therefore, resort to density ratios to account for this discrepancy.
Density ratio learning itself is an active research area and is out of scope of this paper.
See \citet{hallak2017consistent,liu2018breaking,gelada2019off,nachum2019dualdice,zhang2020gendice,zhang2020gradientdice,mousavi2020blackbox} for more details about density ratio learning.

\begin{algorithm}
\textbf{Input:} 
A batch of transitions $\{s_i, a_i, r_i, s^\prime_i\}_{i = 1, \dots, K}$, a learning rate $\alpha$, and the number of policy updates $T$ \;
\While{True}{
 Learn the density ratio $\rho_\pi(s, a) \doteq \frac{d_\pi(s, a)}{d(s, a)}$ with $\{s_i, a_i, r_i, s^\prime_i\}_{i = 1, \dots, K}$\; 
\tcp{For example, use GradientDICE \citep{zhang2020gradientdice}}
 $y \gets \frac{1}{K}\sum_{i=1}^K \rho_\pi(s_i, a_i)r_i$ \;
\For{$i = 1, \dots, K$}{
  $\hat{r}_i \gets r_i - \lambda r_i^2 + 2\lambda r_i y$ \;
  $a_i^\prime \sim \pi( \cdot | s_i^\prime)$
}
\For{$j=1, \dots, T$}{
  Learn $q_\pi(s, a)$ with $\{s_i, a_i, \hat{r}_i, s^\prime_i, a_i^\prime\}_{i = 1, \dots, K}$\;
  \tcp{For example, use TD(0) \citep{sutton1988learning} in $\mathcal{S} \times \mathcal{A}$}
  Learn $\rho_\pi(s, a)$ with $\{s_i, a_i, r_i, s^\prime_i\}_{i = 1, \dots, K}$\; 
$\theta \gets \theta + \alpha \rho_\pi(s_i, a_i) \nabla_\theta \log \pi(a_i | s_i) q_\pi(s_i, a_i)$, where $i$ is randomly selected
}
}
\caption{\label{alg:off-policy-mvpi} Off-line MVPI}
\end{algorithm}

\begin{savenotes}
\begin{table*}[t]
\centering
\begin{tabular}{l|llll|llll}
\hline
& $\Delta_\text{J}^{\text{TRVO}}$ & $\Delta_\text{mean}^{\text{TRVO}}$ & $\Delta_\text{variance}^{\text{TRVO}}$ & $\Delta_\text{SR}^{\text{TRVO}}$ & $\Delta_\text{J}^{\text{MVPI}}$ & $\Delta_\text{mean}^{\text{MVPI}}$ & $\Delta_\text{variance}^{\text{MVPI}}$ & $\Delta_\text{SR}^{\text{MVPI}}$ \\ \hline
InvertedP.& -321\% & -1\% & $3 \cdot 10^7$\% & -100\%& 0\% & 0\% & 0\% & 0\%\\ \hline
InvertedD.P.& -367\% & -17\% & 365\% & -62\%& 42\% & 1\% & -41\% & 32\%\\ \hline
HalfCheetah& 92\% & -87\% & -92\% & -54\%& 99\% & -44\% & -98\% & 285\%\\ \hline
Walker2d& -1\% & -61\% & 0\% & -61\%& 88\% & -55\% & -88\% & 30\%\\ \hline
Swimmer& -10\% & -7\% & 18\% & -14\%& 4\% & -2\% & -53\% & 42\%\\ \hline
Hopper& 57\% & -25\% & -57\% & 13\%& 72\% & -3\% & -71\% & 81\%\\ \hline
Reacher& -48\% & -12\% & 102\% & 21\%& -1\% & -1\% & 0\% & -1\%\\ \hline
Ant& 97\% & -85\% & -97\% & -18\%& 96\% & -53\% & -96\% & 130\%\\ \hline
\end{tabular}
\caption{\label{tab:normalized} 
Normalized statistics of TRVO and MVPI-TD3.
MVPI is shorthand for MVPI-TD3 in this table.
For $\text{algo} \in \{\text{MVPI-TD3}, \text{TRVO}, \text{TD3}\}$,
we compute the risk-aware performance metric as 
 $\text{J}_{\text{algo}} \doteq \text{mean}_{\text{algo}} - \lambda \text{variance}_{\text{algo}}$ with $\lambda = 1$,
where $\text{mean}_{\text{algo}}$ and $\text{variance}_{\text{algo}}$ are mean and variance of the 100 evaluation episodic returns.
Then we compute the normalized statistics as 
$\Delta_\text{J}^\text{algo} \doteq \frac{\text{J}_{\text{algo}} - \text{J}_{\text{TD3}}}{|\text{J}_{\text{TD3}}|}$,
$\Delta_\text{mean}^\text{algo} \doteq \frac{\text{mean}_{\text{algo}} - \text{mean}_{\text{TD3}}}{|\text{mean}_{\text{TD3}}|} $,
$\Delta_\text{variance}^\text{algo} \doteq \frac{\text{variance}_{\text{algo}} - \text{variance}_{\text{TD3}}}{|\text{variance}_{\text{TD3}}|}$,
$\text{SR}_\text{algo} \doteq \frac{\text{mean}_\text{algo}}{\sqrt{\text{variance}_\text{algo}}}$,
$\Delta_\text{SR}^\text{algo} \doteq \frac{\text{SR}_{\text{algo}} - \text{SR}_{\text{TD3}}}{|\text{SR}_{\text{TD3}}|}$
Both MVPI-TD3 and TRVO are trained with $\lambda = 1$.
All $J_\text{algo}$ are averaged over 10 independent runs.
}
\end{table*}
\end{savenotes}

\section{Experiments}
All curves in this section are averaged over 10 independent runs with shaded regions indicate standard errors.
All implementations are publicly available.\footnote{\url{https://github.com/ShangtongZhang/DeepRL}} 

\textbf{On-line learning setting.}
In many real-world robot applications, e.g., in a warehouse, 
it is crucial that the robots' performance be consistent. 
In such cases, risk-averse RL is an appealing option to train robots.
Motivated by this,
we benchmark MVPI-TD3 on eight Mujoco robot manipulation tasks from OpenAI gym.
Though Mujoco tasks have a stochastic initial state distribution,
they are usually equipped with a deterministic transition kernel.
To make them more suitable for investigating risk-averse control algorithms,
we add a Gaussian noise $\mathcal{N}(0, 0.1^2)$ to every action.
As we are not aware of any other off-policy mean-variance RL method, 
we use several recent on-policy mean-variance RL method as baselines,
namely, the methods of \citet{tamar2012policy,prashanth2013actor},
MVP \citep{liu2018block}, and TRVO \citep{papini:risk2019}.
The methods of \citet{tamar2012policy,prashanth2013actor} and MVP are not designed for deep RL settings.
To make the comparison fair,
we improve those baselines with parallelized actors to stabilize the training of neural networks as in \citet{mnih2016asynchronous}.\footnote{They are on-policy algorithms so we cannot use experience replay.}
TRVO is essentially MVPI with TRPO for the policy improvement.
We, therefore, implement TRVO as MVPI with Proximal Policy Optimization (PPO, \citealt{schulman2017proximal}) to improve its performance. 
We also use the vanilla risk-neutral TD3 as a baseline.
We use two-hidden-layer neural networks for function approximation. 

We run each algorithm for $10^6$ steps and evaluate the algorithm every $10^4$ steps for 20 episodes.
We report the mean of those 20 episodic returns against the training steps in Figure~\ref{fig:mvpi-td3}.
The curves are generated by setting $\lambda = 1$.
More details are provided in the appendix.
The results show that MVPI-TD3 outperforms all risk-averse baselines in all tested domains (in terms of both final episodic return and learning speed).
Moreover, the curves of the methods from the total reward perspective are always flat in all domains
with only one exception that MVP achieves a reasonable performance in \texttt{Reacher},
though exhaustive hyperparameter tuning is conducted,
including $\lambda$ and $\xi$.
This means
they fail to achieve the risk-performance trade-off in our tested domains,
indicating that those methods are not able to scale up to Mujoco domains with neural network function approximation.
Those flat curves suggest that perturbation-based gradient estimation in \citet{prashanth2013actor} may not work well with neural networks, 
and the $G_0^2 \sum_{t=0}^{T-1} \nabla_\theta \log \pi(a_t | s_t)$ term in \citet{tamar2012policy} and MVP
may suffer from high variance, yielding instability.
By contrast, the two algorithms from the per-step reward perspective (MVPI-TD3 and TRVO) do learn a reasonable policy,
which experimentally supports 
our argument that 
optimizing the variance of the per-step reward is more efficient and scalable than optimizing the variance of the total reward.

\begin{figure*}[t]
\centering
\includegraphics[width=0.8\textwidth]{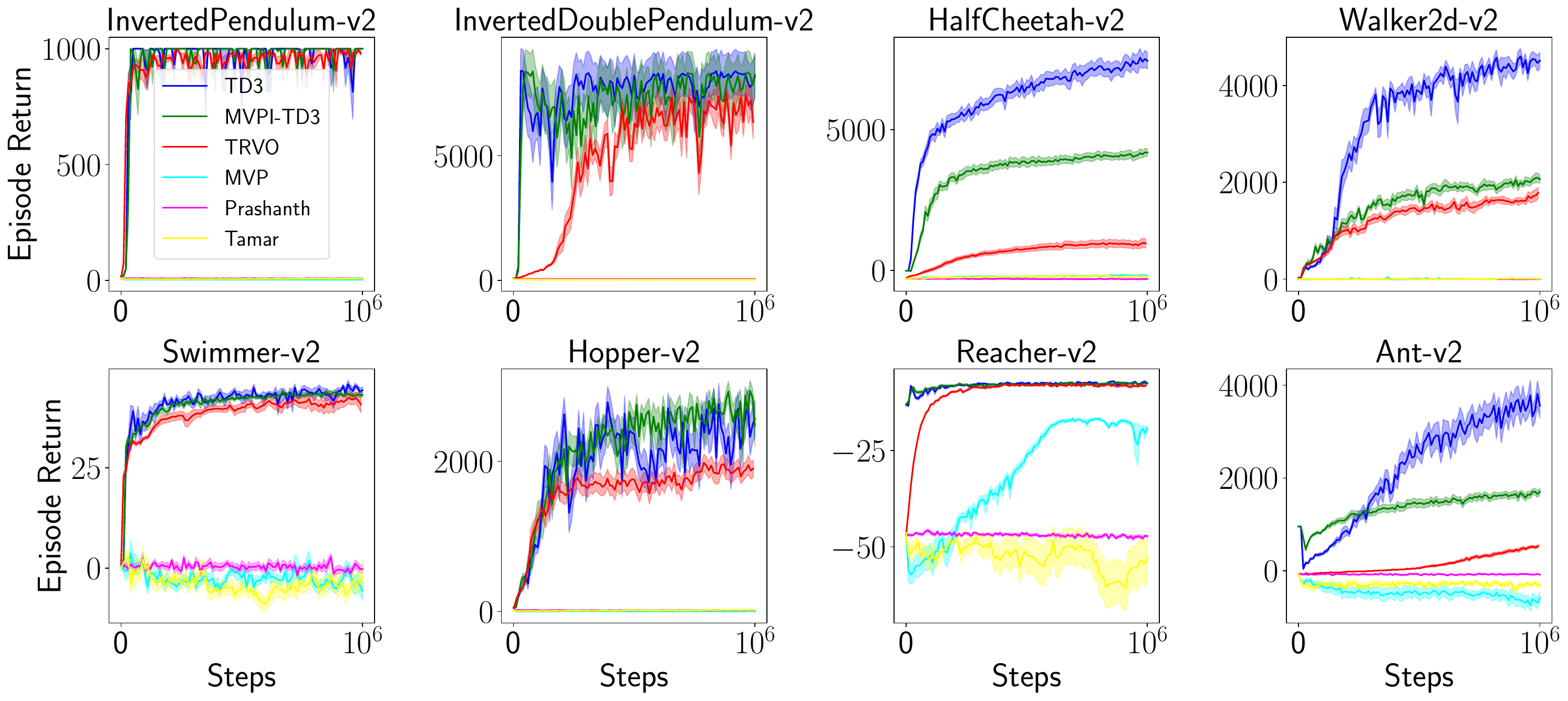}
\caption{\label{fig:mvpi-td3} Training progress of MVPI-TD3 and baseline algorithms. 
Curves are averaged over 10 independent runs with shaded regions indicating standard errors.}
\end{figure*}

As shown in Figure~\ref{fig:mvpi-td3},
the vanilla risk-neutral TD3 outperforms all risk-averse algorithms (in terms of episodic return).
This is expected as 
it is in general hard for a risk-averse algorithm to outperform its risk-neutral counterpart in terms of a risk-neutral performance metric.
We now compare TD3, MVPI-TD3 and TRVO in terms of a risk-aware performance metric.
To this end, we test the agent at the end of training for an extra 100 episodes
to compute a risk-aware performance metric.
We report the normalized statistics in Table~\ref{tab:normalized}. 
The results show that MVPI-TD3 outperforms TD3 in 6 out of 8 tasks in terms of the risk-aware performance metric.
Moreover, MVPI-TD3 outperforms TRVO in 7 out of 8 tasks.
We also compare the algorithms in terms of the sharp ratio (SR, \citealt{sharpe1966mutual}).
Although none of the algorithms optimizes SR directly, 
MVPI-TD3 outperforms TD3 and TRVO in 6 and 7 tasks respectively in terms of SR.
This performance boost of MVPI-TD3 over TRVO indeed results from the performance boost of TD3 over PPO,
and it is the flexibility of MVPI that makes this off-the-shelf application TD3 in risk-averse RL possible. 
We also provide versions of Figure~\ref{fig:mvpi-td3} and Table~\ref{tab:normalized} with $\lambda=0.5$ and $\lambda=2$ in the appendix.
The relative performance is the same as $\lambda = 1$.
Though we propose to optimize the variance of the per-step reward as a proxy for optimizing the variance of the total reward,
we also provide in the appendix curves comparing MVPI-TD3 and TRVO w.r.t. risk measures consisting of both mean and variance of the per-step reward.
With those risk measures, MVPI-TD3 generally outperforms TRVO as well.

\textbf{Off-line learning setting.} 
We consider an infinite horizon MDP (Figure~\ref{fig:off-policy-mvpi-MDP}).
Two actions $a_0$ and $a_1$ are available at $s_0$,
and we have $p(s_3|s_0, a_1) = 1, p(s_1|s_0, a_0) = p(s_2|s_0, a_0) = 0.5$.
The discount factor is $\gamma = 0.7$ and the agent is initialized at $s_0$.
We consider the objective $J_\lambda(\pi)$ in Eq~\eqref{eq:obj}. 
If $\lambda = 0$,
the optimal policy is to choose $a_0$.
If $\lambda$ is large enough,
the optimal policy is to choose $a_1$.
We consider the behavior-agnostic off-policy setting, 
where the sampling distribution $d$ satisfies $d(s_0, a_0) = d(s_0, a_1) = d(s_1) = d(s_2) = d(s_3) = 0.2$.
This sampling distribution may result from multiple unknown behavior policies.
Although the representation is tabular,
we use a softmax policy.
So the problem we consider is \emph{nonlinear} and \emph{nonconvex}.
As we are not aware of any other behavior-agnostic off-policy risk-averse RL method, 
we benchmark only Off-line MVPI (Algorithm~\ref{alg:off-policy-mvpi}).
Details are provided in the appendix. 
We report the probability of selecting $a_0$ against training iterations.
As shown in Figure~\ref{fig:off-policy-mvpi}, 
$\pi(a_0 | s_0)$ decreases as $\lambda$ increases,
indicating Off-line MVPI copes well with different risk levels. 
The main challenge in Off-line MVPI rests on learning the density ratio.
Scaling up density ratio learning algorithms reliably to more challenging domains like Mujoco is out of the scope of this paper.

\begin{figure}[h]
\centering
\includegraphics[width=0.25\textwidth]{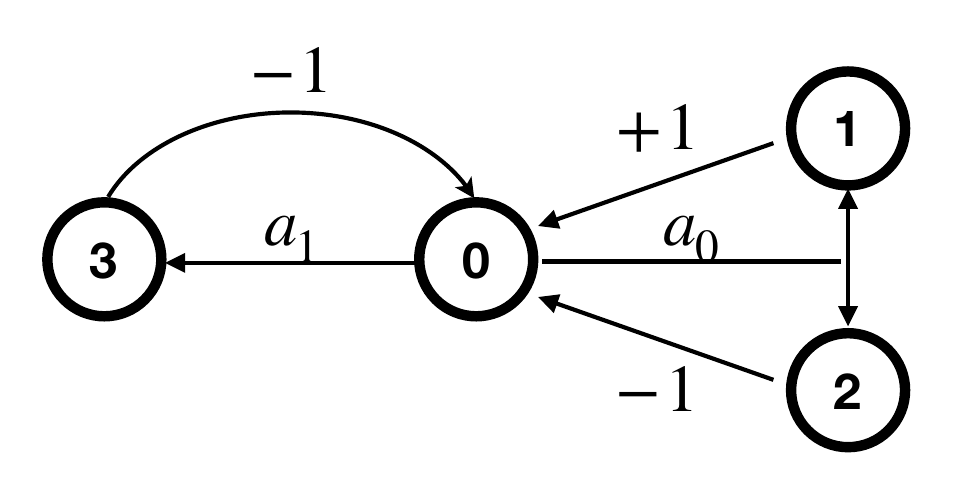}
\caption{\label{fig:off-policy-mvpi-MDP} A tabular MDP}
\end{figure}

\begin{figure}
\centering
\includegraphics[width=0.35\textwidth]{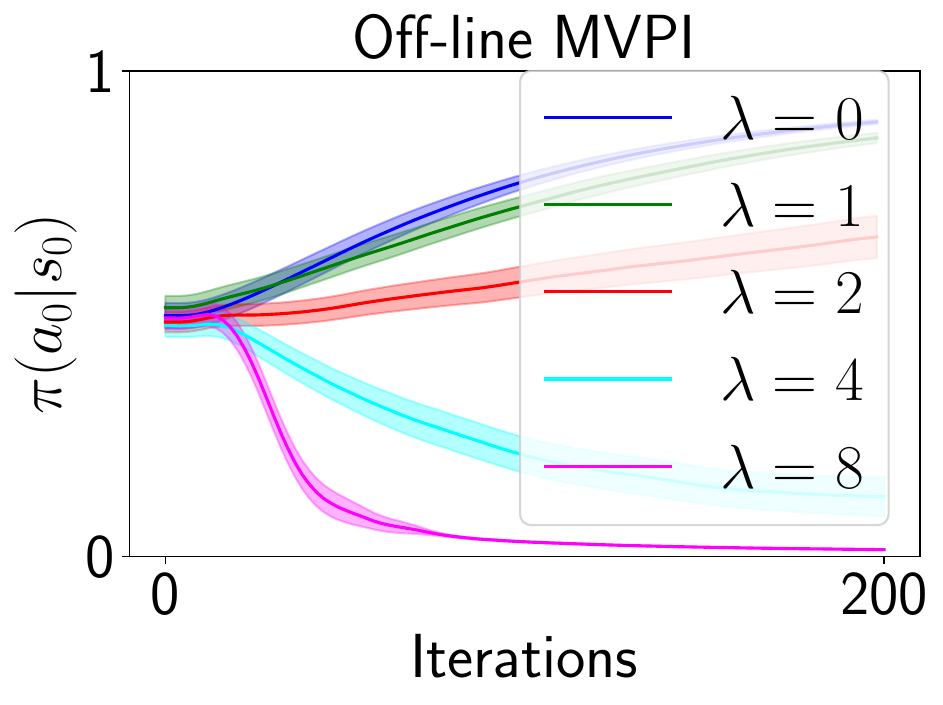}
\caption{\label{fig:off-policy-mvpi} The training progress of Off-line MVPI. Curves are averaged over 30 independent runs with shaded regions indicating standard errors.}
\end{figure}

\section{Related Work}
Both MVPI and \citet{papini:risk2019} consider the per-step reward perspective for mean-variance RL.
In this work, 
we mainly use the variance of the per-step reward as a proxy (upper bound) for optimizing the variance of the total reward.
Though TRVO in \citet{papini:risk2019} is the same as instantiating MVPI with TRPO,
the derivation is dramatically different.
In particular,
it is not clear whether the performance-lower-bound-based derivation for TRVO can be adopted to deterministic policies, off-policy learning, or other policy optimization paradigms,
and this is not explored in \citet{papini:risk2019}.
By contrast, MVPI is compatible with \emph{any} existing risk-neural policy optimization technique.
Furthermore, MVPI works for both the total discounted reward setting and the average reward setting.
It is not clear how the performance lower bound in \citet{papini:risk2019}, 
which plays a central role in TRVO, 
can be adapted to the average reward setting.
All the advantages of MVPI over TRVO result from addressing the policy-dependent-reward issue in \citet{papini:risk2019}.

While the application of Fenchel duality in RL is not new, previously it has been used only to address double sampling issues (e.g., \citet{liu2015finite,dai2017sbeed,liu2018block,nachum2019dualdice}).
By contrast,
we use Fenchel duality together with BCAA to address the policy-dependent-reward issue in \citet{papini:risk2019} and derive a policy iteration framework that appears to be novel to the RL community.


Besides variance, 
value at risk (VaR, \citealt{chow2018risk}), 
conditional value at risk (CVaR, \citealt{chow2014algorithms,tamar2015optimizing,chow2018risk}),
sharp ratio \citep{tamar2012policy},
and exponential utility \citep{howard1972risk,borkar2002q}
are also used for risk-averse RL.
In particular, it is straightforward to consider exponential utility for the per-step reward,
which, however, suffers from the same problems as the exponential utility for the total reward, e.g., it overflows easily \citep{gosavi2014beyond}.


\section{Conclusion}
In this paper, we propose MVPI for risk-averse RL.
MVPI enjoys great flexibility such that any policy evaluation method and risk-neutral control method
can be dropped in for risk-averse control off the shelf, 
in both on- and off-policy settings.
This flexibility dramatically reduces the gap between risk-neutral control and risk-averse control.
To the best of our knowledge, 
MVPI is the first empirical success of risk-averse RL in Mujoco robot simulation domains, and is also the first success of off-policy risk-averse RL and risk-averse RL with deterministic polices.
Deterministic policies play an important role in reducing the variance of a policy \citep{silver2014deterministic}. 
Off-policy learning is important for improving data efficiency \citep{mnih2015human} and exploration \citep{osband2018randomized}.
Incorporating those two elements in risk-averse RL appears novel and is key to the observed performance improvement.

Possibilities for future work include
considering other risk measures (e.g., VaR and CVaR) of the per-step reward random variable,
integrating more advanced off-policy policy optimization techniques (e.g., \citealt{nachum2019algaedice}) in off-policy MVPI,
optimizing $\lambda$ with meta-gradients \citep{xu2018meta}, 
analyzing the sample complexity of MVPI,
and developing theory for approximate MVPI. 


{\small
\bibliography{ref}
}


\newpage
\onecolumn
\appendix

\section{Proofs}
\subsection{Proof of Proposition~\ref{lem:mvpi}}
\begin{proof}
\begin{align}
&J_\lambda(\pi_{k + 1}) \\
=& \sum_{s, a}d_{\pi_{k+1}}(s, a)(r(s, a) - \lambda r(s, a)^2) + \lambda \max_y (2\sum_{s, a} d_{\pi_{k+1}}(s, a)r(s,a)y - y^2) \\
\geq& \sum_{s, a}d_{\pi_{k+1}}(s, a)(r(s, a) - \lambda r(s, a)^2) + \lambda (2\sum_{s, a} d_{\pi_{k+1}}(s, a)r(s,a)y_{k+1} - y_{k+1}^2) \\
=& \sum_{s, a}d_{\pi_{k+1}}(s, a) \big(r(s,a) - \lambda r{(s,a)^2} + 2\lambda r(s,a){y_{k + 1}}\big) - \lambda y_{k + 1}^2 \\
\geq& \sum_{s, a}d_{\pi_k}(s, a)\big(r(s,a) - \lambda r{(s,a)^2} + 2\lambda r(s,a){y_{k + 1}}\big) - \lambda y_{k+1}^2 \intertext{\hfill (By definition, $\pi_{k+1}$ is the maximizer.)}\\
=& \sum_{s, a}d_{\pi_{k}}(s, a)(r(s, a) - \lambda r(s, a)^2) + \lambda (2\sum_{s, a} d_{\pi_k}(s, a)r(s,a)y_{k + 1} - y_{k + 1}^2) \\
=& \sum_{s, a}d_{\pi_{k}}(s, a)(r(s, a) - \lambda r(s, a)^2) + \lambda \max_y (2\sum_{s, a} d_{\pi_k}(s, a)r(s,a)y - y^2) \intertext{\hfill (By definition, $y_{k+1}$ is the maximizer of the quadratic.)} \\
=&J_\lambda(\pi_k)
\end{align}
\end{proof}

\subsection{Proof of Proposition~\ref{thm:mvpi}}
\begin{lemma}
\label{lem:lipschitz}
Under Assumption~\ref{assu:params},
$\nabla_\theta J_\lambda(\theta)$ is Lipschitz continuous in $\theta$.
\end{lemma}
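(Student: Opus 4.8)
The plan is to reduce the statement to a uniform bound on the Hessian of expected-discounted-return functionals and then reassemble via the product rule. First I would rewrite the objective in terms of the discounted occupancy measure: since $\E[R] = \sum_{s,a} d_{\pi_\theta}(s,a) r(s,a)$, we have $J_\lambda(\theta) = w(\theta) + \lambda\, u(\theta)^2$, where $u(\theta) \doteq \sum_{s,a} d_{\pi_\theta}(s,a) r(s,a)$ and $w(\theta) \doteq \sum_{s,a} d_{\pi_\theta}(s,a)\big(r(s,a) - \lambda r(s,a)^2\big)$. Both $u$ and $w$ are of the generic form $J_f(\theta) \doteq \sum_{s,a} d_{\pi_\theta}(s,a) f(s,a)$ for a bounded $f$ (bounded because $r$ is bounded), i.e. $(1-\gamma)$ times the expected discounted return of the MDP whose reward is $f$. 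So it suffices to prove: for every bounded $f$, $\nabla_\theta J_f(\theta)$ is bounded and Lipschitz on $\Theta$.

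For that generic claim I would invoke the first- and second-order policy gradient formulas. By the policy gradient theorem, $\nabla_\theta J_f(\theta) = \sum_{s,a} d_{\pi_\theta}(s,a)\, q^f_{\pi_\theta}(s,a)\, \nabla_\theta \log \pi_\theta(a|s)$, where $q^f_{\pi_\theta}$ is the action-value function for reward $f$; since $\gamma < 1$ and $f$ is bounded, $|q^f_{\pi_\theta}| \le \|f\|_\infty / (1-\gamma)$ uniformly, and $d_{\pi_\theta}$ is a probability distribution, so the bound on $\|\nabla_\theta\log\pi_\theta\|$ from Assumption~\ref{assu:params} gives a uniform bound on $\|\nabla_\theta J_f(\theta)\|$. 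Differentiating once more expresses the Hessian of $J_f$ as a finite sum of terms of three types: ones carrying $\nabla_\theta^2 \log\pi_\theta$, ones carrying the outer product $\nabla_\theta\log\pi_\theta\,(\nabla_\theta\log\pi_\theta)^\top$, and ones carrying $\nabla_\theta q^f_{\pi_\theta}$ or $\nabla_\theta d_{\pi_\theta}$. Using $\gamma < 1$ (geometric decay of the discounted occupancy and of the value-function series) together with the uniform bounds on $\nabla_\theta\log\pi_\theta$ and $\nabla_\theta^2\log\pi_\theta$ from Assumption~\ref{assu:params}, each of these terms — and hence $\nabla_\theta^2 J_f(\theta)$ — is bounded uniformly over $\Theta$; this is the standard second-order policy gradient bound from the stochastic policy gradient literature. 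A uniformly bounded, continuous Hessian yields a Lipschitz gradient by integrating along segments (if $\Theta$ is not convex one restricts to convex subsets, or uses that $\nabla_\theta J_f$ and $\nabla_\theta^2 J_f$ extend continuously with the same bounds), which establishes the generic claim.

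Finally I would reassemble: $\nabla_\theta J_\lambda(\theta) = \nabla_\theta w(\theta) + 2\lambda\, u(\theta)\, \nabla_\theta u(\theta)$. The generic claim (with $f = r - \lambda r^2$) gives that $\nabla_\theta w$ is Lipschitz, and (with $f = r$) that $u$ and $\nabla_\theta u$ are bounded and Lipschitz on $\Theta$ — Lipschitzness of $u$ itself following from boundedness of $\nabla_\theta u$. A product of bounded Lipschitz maps is Lipschitz, since $\|u(\theta_1)\nabla_\theta u(\theta_1) - u(\theta_2)\nabla_\theta u(\theta_2)\| \le |u(\theta_1)|\,\|\nabla_\theta u(\theta_1)-\nabla_\theta u(\theta_2)\| + |u(\theta_1)-u(\theta_2)|\,\|\nabla_\theta u(\theta_2)\|$, so $u\,\nabla_\theta u$ is Lipschitz, and therefore so is $\nabla_\theta J_\lambda$.

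The main obstacle is the uniform bound on $\nabla_\theta^2 J_f$: differentiating the discounted state–action occupancy measure and the action-value function with respect to $\theta$ and bounding every resulting term is routine but is the bulk of the argument, and it is exactly where $\gamma < 1$ and the boundedness of the first two derivatives of $\log\pi_\theta$ are used. Everything else — the decomposition $J_\lambda = w + \lambda u^2$ and the product-rule bookkeeping — is elementary.
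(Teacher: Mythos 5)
Your proposal is correct and follows essentially the same route as the paper's proof: decompose $J_\lambda(\theta)$ into expected-discounted-reward functionals ($\E[R]$, $\E[R^2]$, and the squared term), bound their gradients via the policy gradient theorem and Assumption~\ref{assu:params}, bound their Hessians (the paper simply cites Lemma B.2 of \citet{papini2018stochastic} where you sketch the standard second-order derivation), and finish with the product rule for bounded Lipschitz maps. The only difference is cosmetic bookkeeping (grouping $\E[R]-\lambda\E[R^2]$ into a single $w(\theta)$) and your extra care about convexity of $\Theta$ when passing from a bounded Hessian to a Lipschitz gradient, which the paper glosses over.
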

\begin{proof}
By definition,
\begin{align}
\nabla J_\lambda(\theta) = \nabla \E[R] - \nabla \lambda \E[R^2] + 2 \lambda \E[R] \nabla \E[R].
\end{align}
The policy gradient theorem \citep{sutton2000policy} and the boundedness of $\nabla \log \pi_\theta(a|s)$ imply that $\nabla \E[R]$ is bounded.
So $\E[R]$ is Lipschitz continuous.
Lemma B.2 in \citet{papini2018stochastic} shows that 
the Hessian of $\E[R]$ is bounded.
So $\nabla \E[R]$ is Lipschitz continuous.
So does $\nabla \E[R^2]$.
Together with the boundedness of $\E[R]$,
it is easy to see $\nabla J_\lambda(\theta)$ is Lipschitz continuous.
\end{proof}

We now prove Proposition~\ref{thm:mvpi}.
\begin{proof}
Under Assumption~\ref{assu:compact}, Theorem 4.1(c) in \citet{tseng2001convergence}
shows
that the limit of any convergent subsequence $\{(\theta_k, y_k)\}_{k \in \mathcal{K}}$,
referred to as $(\theta_\mathcal{K}, y_\mathcal{K})$,
satisfies $\nabla_\theta J_\lambda(\theta_\mathcal{K}, y_\mathcal{K}) = 0$ and $\nabla_y J_\lambda(\theta_\mathcal{K}, y_\mathcal{K}) = 0$.
In particular, that Theorem 4.1(c) is developed for general block coordinate ascent algorithms with $M$ blocks.
Our MVPI is a special case with two blocks (i.e., $\theta$ and $y$).
With only two blocks, the conclusion of Theorem 4.1(c) follows immediately from Eq (7) and Eq (8) in \citet{tseng2001convergence},
without involving the assumption that the maximizers of the $M - 2$ blocks are unique.

As $J_\lambda(\theta, y)$ is quadratic in $y$,
$\nabla_y J_\lambda(\theta_\mathcal{K}, y_\mathcal{K}) = 0$ implies 
$y_{\mathcal{K}} = \arg\max_y J_\lambda(\theta_{\mathcal{K}}, y) = (1 - \gamma) J(\theta_{\mathcal{K}})$.
Recall the Fenchel duality
$x^2 = \max_z f(x, z)$,
where $f(x, z) \doteq 2xz - z^2$.
Applying Danskin's theorem (Proposition B.25 in \citet{bertsekas1995nonlinear}) to Fenchel duality 
 yields
\begin{align}
\label{eq:danskin}
\frac{\partial x^2}{\partial x} = \frac{\partial f(x, \arg\max_z f(x, z))}{\partial x}.
\end{align}
Note Danskin's theorem shows that we can treat $\arg\max_z f(x, z)$ as a constant independent of $x$ when computing the gradients in the RHS of Eq~\eqref{eq:danskin}.
Applying Danskin's theorem in the Fenchel duality used in Eq~\eqref{eq:obj} yields
\begin{align}
\label{eq:stationary_point}
\nabla_\theta J_\lambda(\theta_{\mathcal{K}}) = \nabla_\theta J_\lambda(\theta_\mathcal{K}, y_\mathcal{K}) = 0.
\end{align}
Eq~\eqref{eq:stationary_point} can also be easily verified without invoking Danskin's theorem by expanding the gradients explicitly. 
Eq~\eqref{eq:stationary_point} indicates that the subsequence $\{\theta_k\}_{k \in \mathcal{K}}$ converges to a stationary point of $J_\lambda(\theta)$.

Theorem~\ref{lem:mvpi} establishes the monotonic policy improvement when we search over all possible policies (The $\arg\max$ of Step 2 in Algorithm~\ref{alg:mvpi} is taken over all possible policies).
Fortunately, 
the proof of Theorem~\ref{lem:mvpi} can also be used (up to a change of notation) to establish that 
\begin{align}
\label{eq:improv}
J_\lambda(\theta_{k+1}) \geq J_\lambda(\theta_k).
\end{align}
In other words, the monotonic policy improvement also holds  when we search over $\Theta$.
Eq~\eqref{eq:improv} and the fact that $J_\lambda(\theta)$ is bounded from above imply that $\{J_\lambda(\theta_k)\}_{k = 1, \dots}$ converges to some $J_*$.

Let $\Theta_0 \doteq \{\theta \in \Theta \mid J_\lambda(\theta) \geq J_\lambda(\theta_0) \}$.
We first show $\Theta_0$ is compact.
Let $\{\theta^i\}_{i=1, \dots}$ be any convergent sequence in $\Theta_0$ and $\theta^\infty$ be its limit.
We define $y^i \doteq \arg\max_y J_\lambda(\theta^i, y) = (1 - \gamma) J(\theta^i)$ for $i = 1, \dots, \infty$.
The proof of Lemma~\ref{lem:lipschitz} shows $J(\theta)$ is Lipschitz continuous in $\theta$,
indicating $\{\theta^i, y^i\}$ converges to $\{\theta^\infty, y^\infty\}$.
As $J_\lambda(\theta^i, y^i) = J_\lambda(\theta^i) \geq J_\lambda(\theta_0)$,
Assumption~\ref{assu:compact} implies $J_\lambda(\theta^\infty, y^\infty) \geq J_\lambda(\theta_0)$,
i.e., $J_\lambda(\theta^\infty) \geq J_\lambda(\theta_0)$, $\theta^\infty \in \Theta_0$.
So $\Theta_0$ is compact.
As $\{\theta_k\}$ is contained in $\Theta_0$, 
there must exist a convergent subsequence,
indicating 
\begin{align}
\lim \inf_k || \nabla_\theta J_\lambda(\theta_k) || = 0.
\end{align}
\end{proof}

\section{Experiment Details}

The pseudocode of MVPI-TD3 and our TRVO (MVPI-PPO) are provide in Algorithms \ref{alg:mvpi-td3} and \ref{alg:mvppo} respectively.

\begin{algorithm}
\textbf{Input:} \;
$\theta, \psi$: parameters for the deterministic policy $\pi$ and the value function $q_\pi$ \;
$K$: number of recent rewards for estimating the policy performance \;
$\lambda$: weight of the variance penalty \;\;
Initialize the replay buffer $\mathcal{M}$ \;
Initialize $S_0$ \;
\For{$t = 0, \dots, $}{
$A_t \gets \pi(S_t) + \mathcal{N}(0, \sigma^2)$ \;
Execute $A_t$, get $R_{t+1}, S_{t+1}$ \;
Store $(S_t, A_t, R_{t+1}, S_{t+1})$ into $\mathcal{M}$ \;
$y \gets \frac{1}{K} \sum_{i = t - K + 2}^{t + 1} R_t$ \;
Sample a mini-batch $\{s_i, a_i, r_i, s_i^\prime \}_{i = 1, \dots, N}$ from $\mathcal{M}$ \;
\For{$i = 1, \dots, N$}{
  $\hat{r}_i \gets r_i - \lambda r_i^2 + 2\lambda r_i y$ 
}
Use TD3 with $\{s_i, a_i, \hat{r}_i, s_i^\prime \}_{i = 1, \dots, N}$ to optimize $\theta$ and $\psi$ \;
$t \gets t + 1$
}
\caption{\label{alg:mvpi-td3} MVPI-TD3}
\end{algorithm}

\begin{algorithm}
\textbf{Input:} \;
$\theta, \psi$: parameters for the policy $\pi$ and the value function $v_\pi$ \;
$K, \lambda$: rollout length and weight for variance \;\;
\While{True}{
 Empty a buffer $\mathcal{M}$ \;
 Run $\pi$ for $K$ steps in the environment, storing $\{s_i, a_i, r_i, s_{i+1}\}_{i = 1, \dots, K}$ into $\mathcal{M}$ \;
 $y \gets \frac{1}{K}\sum_{i=1}^K r_i$ \;
\For{$i = 1, \dots, K$}{
  $\hat{r}_i \gets r_i - \lambda r_i^2 + 2\lambda r_i y$ \;
}
Use PPO with $\{s_i, a_i, \hat{r}_i, s_{i+1}\}_{i = 1, \dots, K}$ to optimize $\theta$ and $\psi$
}
\caption{\label{alg:mvppo} MVPI-PPO}
\end{algorithm}

\textbf{Task Selection:} 
We use eight Mujoco tasks from Open AI gym~\footnote{\url{https://gym.openai.com/}}\citep{brockman2016openai} and implement the tabular MDP in Figure~\ref{fig:off-policy-mvpi}a by ourselves.

\textbf{Function Parameterization:}
For MVPI-TD3 and TD3, 
we use the same network architecture as \citet{fujimoto2018addressing}.
For TRVO (MVPI-PPO), the methods of \citet{tamar2012policy,prashanth2013actor}, and MVP,
we use the same network architecture as \citet{schulman2017proximal}.

\textbf{Hyperparameter Tuning:}
For MVPI-TD3 and TD3, 
we use the same hyperparameters as \citet{fujimoto2018addressing}.
In particular, for MVPI-TD3, we set $K = 10^4$.
For TRVO (MVPI-PPO),
we use the same hyperparameters as \citet{schulman2017proximal}.
We implement the methods of \citet{prashanth2013actor,tamar2012policy} and MVP with multiple parallelized actors like A2C in \citet{baselines} and inherit the common hyperparameters from \citet{baselines}.

\textbf{Hyperparameters of \citet{prashanth2013actor}:}
To increase stability, 
we treat $\lambda$ as a hyperparameter instead of a variable.
Consequently, $\xi$ does not matter.
We tune $\lambda$ from $\{0.5, 1, 2\}$.
We set the perturbation $\beta$ in \citet{prashanth2013actor} to $10^{-4}$.
We use 16 parallelized actors.
The initial learning rate of the RMSprop optimizer is $7 \times 10^{-5}$,
tuned from $\{7 \times 10^{-5}, 7 \times 10^{-4}, 7 \times 10^{-3}\}$.
We also test the Adam optimizer, 
which performs the same as the RMSprop optimizer.
We use policy entropy as a regularization term,
whose weight is 0.01.
The discount factor is 0.99.
We clip the gradient by norm with a threshold 0.5.

\textbf{Hyperparameters of \citet{tamar2012policy}:}
We tune $\lambda$ from $\{0.5, 1, 2\}$.
We use $\xi = 50$, tuned from $\{1, 10, 50, 100\}$.
We set the initial learning rate of the RMSprop optimizer to $7 \times 10^{-4}$,
tuned from $\{7 \times 10^{-5}, 7 \times 10^{-4}, 7 \times 10^{-3}\}$.
We also test the Adam optimizer, 
which performs the same as the RMSprop optimizer.
The learning rates for the running estimates of $\E[G_0]$ and $\V(G_0)$ is 100 times of the initial learning rate of the RMSprop optimizer.
We use 16 parallelized actors.
We use policy entropy as a regularization term,
whose weight is 0.01.
We clip the gradient by norm with a threshold 0.5.

\textbf{Hyperparameters of \citet{liu2018block}:}
We tune $\lambda$ from $\{0.5, 1, 2\}$.
We set the initial learning rate of the RMSprop optimizer to $7 \times 10^{-4}$,
tuned from $\{7 \times 10^{-5}, 7 \times 10^{-4}, 7 \times 10^{-3}\}$.
We also test the Adam optimizer, 
which performs the same as the RMSprop optimizer.
We use 16 parallelized actors.
We use policy entropy as a regularization term,
whose weight is 0.01.
We clip the gradient by norm with a threshold 0.5.

\textbf{Computing Infrastructure:} 
We conduct our experiments on an Nvidia DGX-1 with PyTorch,
though no GPU is used.

In our off-line off-policy experiments, 
we set $K$ to $10^3$ and use tabular representation for $\rho_\pi, q_\pi$.
For $\pi$, we use a softmax policy with tabular logits.

\section{Other Experimental Results}
We report the empirical results with $\lambda=0.5$ and $\lambda=2$ in Figure \ref{fig:mvpi-td3-0.5}, Table~\ref{tab:normalized-0.5}, Figure~\ref{fig:mvpi-td3-2}, and Table~\ref{tab:normalized-2}.
We report the empirical results comparing MVPI-TD3 and TRVO w.r.t. $J_\text{reward}$ in Figures~\ref{fig:mvpi-td3-per-step-reward-0.5}, \ref{fig:mvpi-td3-per-step-reward-1}, and \ref{fig:mvpi-td3-per-step-reward-2}.
For an algorithm, $J_\text{reward}$ is defined as 
\begin{align}
J_\text{reward} \doteq \text{mean}(\{r_i\}) - \lambda \text{variance}(\{r_i\}),
\end{align}
where $\{r_i\}$ are the rewards during the evaluation episodes.
We do not use the $\gamma$-discounted reward in computing $J_\text{reward}$ because we use the direct average for the policy evaluation step in both MVPI-TD3 and TRVO.
\begin{figure}[h]
\centering
\includegraphics[width=0.8\textwidth]{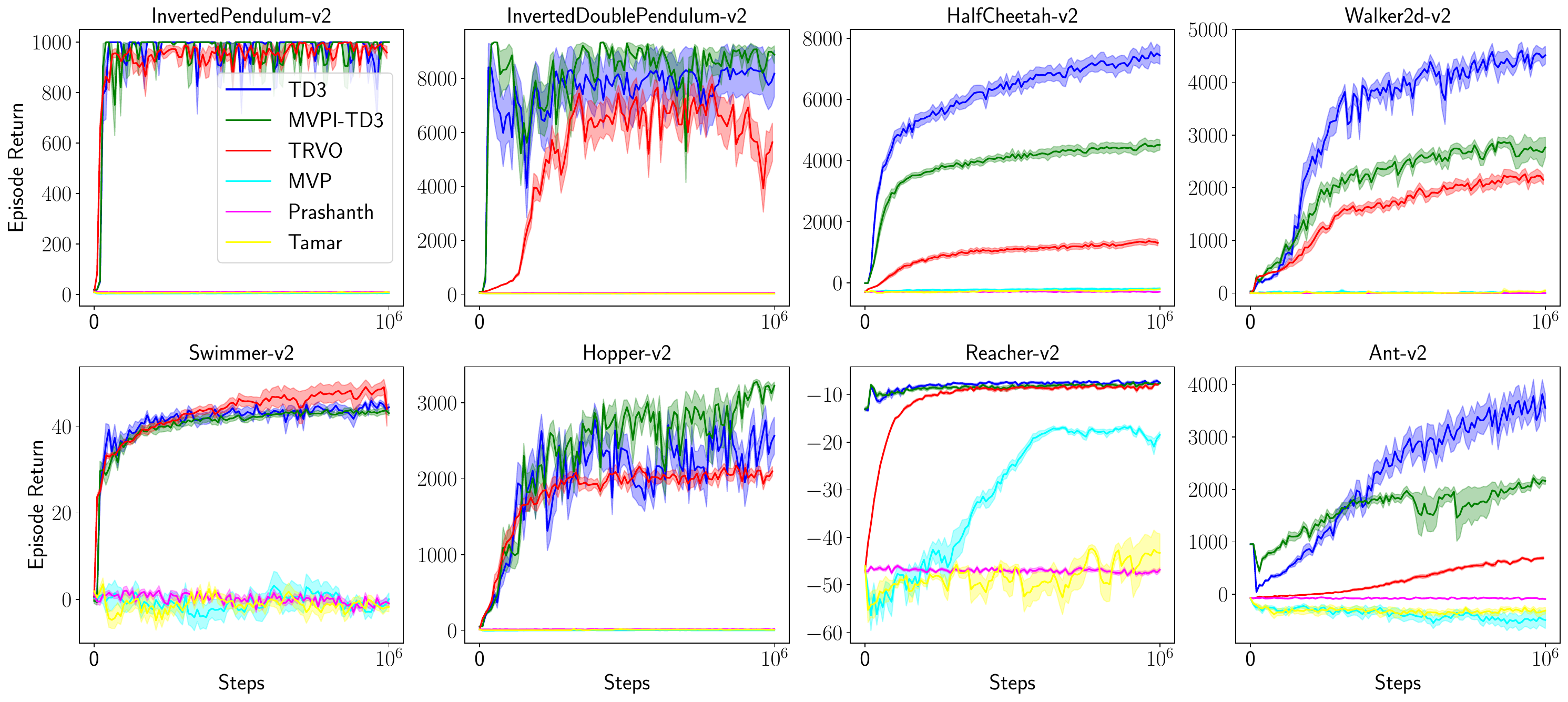}
\caption{\label{fig:mvpi-td3-0.5} Figure~\ref{fig:mvpi-td3} with $\lambda=0.5$.}
\end{figure}

\begin{table}[h]
\centering
\begin{tabular}{l|llll|llll}
\hline
& $\Delta_\text{J}^{\text{TRVO}}$ & $\Delta_\text{mean}^{\text{TRVO}}$ & $\Delta_\text{variance}^{\text{TRVO}}$ & $\Delta_\text{SR}^{\text{TRVO}}$ & $\Delta_\text{J}^{\text{MVPI}}$ & $\Delta_\text{mean}^{\text{MVPI}}$ & $\Delta_\text{variance}^{\text{MVPI}}$ & $\Delta_\text{SR}^{\text{MVPI}}$ \\ \hline
InvertedP.& -571\% & -2\% & $10^8$\% & -100\%& 0\% & 0\% & 0\% & 0\%\\ \hline
InvertedD.P.& -269\% & -31\% & 266\% & -64\%& 4\% & 9\% & -4\% & 11\%\\ \hline
HalfCheetah& 81\% & -82\% & -81\% & -59\%& 97\% & -40\% & -96\% & 188\%\\ \hline
Walker2d& -17\% & -49\% & 15\% & -52\%& 81\% & -37\% & -80\% & 41\%\\ \hline
Swimmer& -2\% & 8\% & 165\% & -34\%& 1\% & -3\% & -69\% & 76\%\\ \hline
Hopper& -9\% & -17\% & 9\% & -20\%& 94\% & 27\% & -93\% & 364\%\\ \hline
Reacher& -34\% & -13\% & 98\% & 20\%& -12\% & -5\% & 35\% & 10\%\\ \hline
Ant& 94\% & -80\% & -94\% & -19\%& 80\% & -41\% & -80\% & 32\%\\ \hline
\end{tabular}
\caption{\label{tab:normalized-0.5} 
Table~\ref{tab:normalized} with $\lambda=0.5$. 
}
\end{table}

\begin{figure}[h]
\centering
\includegraphics[width=0.8\textwidth]{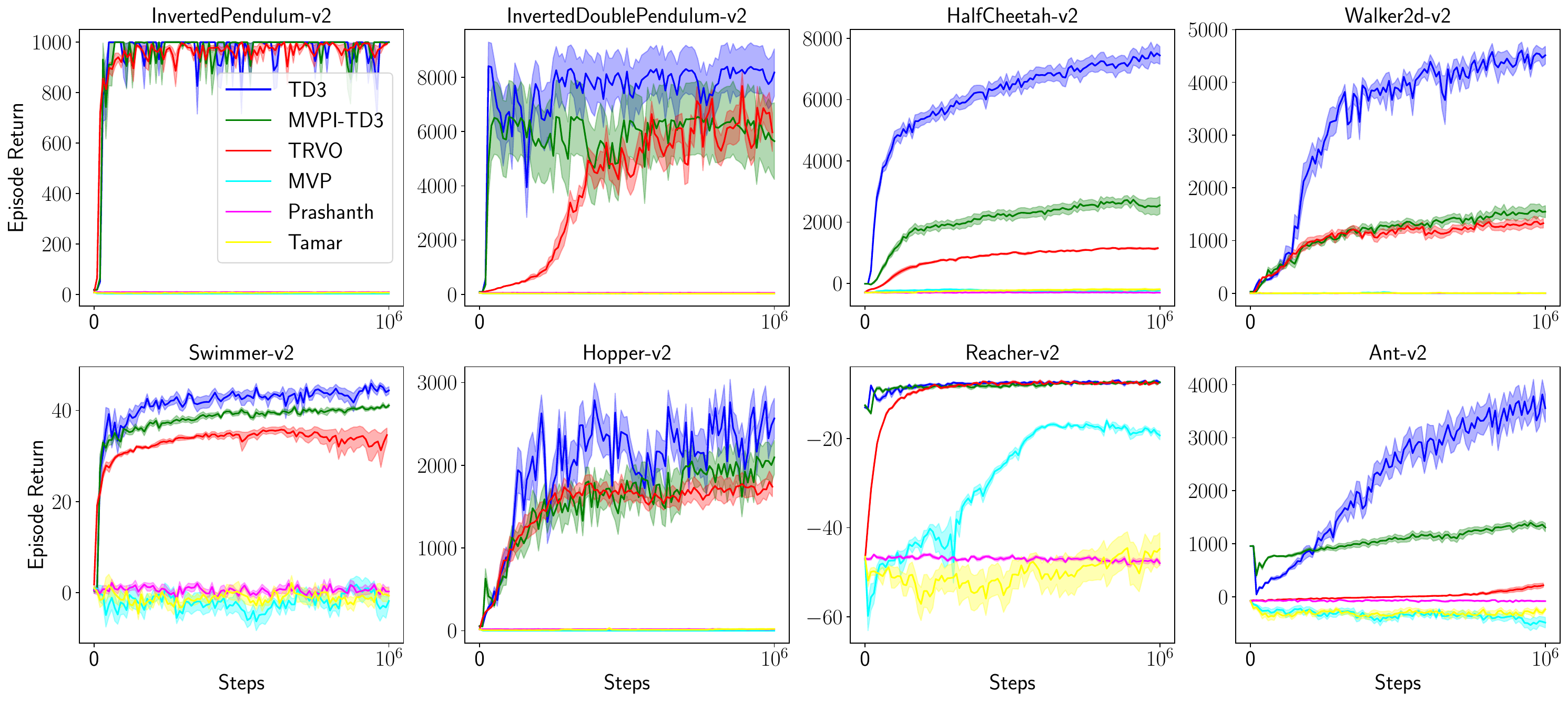}
\caption{\label{fig:mvpi-td3-2} Figure~\ref{fig:mvpi-td3} with $\lambda=2$.}
\end{figure}

\begin{table}[h]
\centering
\begin{tabular}{l|llll|llll}
\hline
& $\Delta_\text{J}^{\text{TRVO}}$ & $\Delta_\text{mean}^{\text{TRVO}}$ & $\Delta_\text{variance}^{\text{TRVO}}$ & $\Delta_\text{SR}^{\text{TRVO}}$ & $\Delta_\text{J}^{\text{MVPI}}$ & $\Delta_\text{mean}^{\text{MVPI}}$ & $\Delta_\text{variance}^{\text{MVPI}}$ & $\Delta_\text{SR}^{\text{MVPI}}$ \\ \hline
InvertedP.& -4130\% & -5\% & $2\cdot 10^8$\% & -100\%& 0\% & 0\% & 0\% & 0\%\\ \hline
InvertedD.P.& -399\% & -13\% & 398\% & -61\%& 77\% & -31\% & -77\% & 44\%\\ \hline
HalfCheetah& 98\% & -84\% & -98\% & 0\%& 96\% & -66\% & -95\% & 60\%\\ \hline
Walker2d& 47\% & -71\% & -48\% & -60\%& 91\% & -65\% & -91\% & 16\%\\ \hline
Swimmer& -210\% & -23\% & 619\% & -71\%& -6\% & -7\% & -13\% & -1\%\\ \hline
Hopper& 64\% & -31\% & -64\% & 15\%& 90\% & -18\% & -89\% & 152\%\\ \hline
Reacher& -43\% & -2\% & 74\% & 23\%& 0\% & 1\% & 1\% & 1\%\\ \hline
Ant& 97\% & -94\% & -97\% & -65\%& 97\% & -63\% & -97\% & 101\%\\ \hline
\end{tabular}
\caption{\label{tab:normalized-2} 
Table~\ref{tab:normalized} with $\lambda=2$.
}
\end{table}

\begin{figure}[h]
\centering
\includegraphics[width=0.8\textwidth]{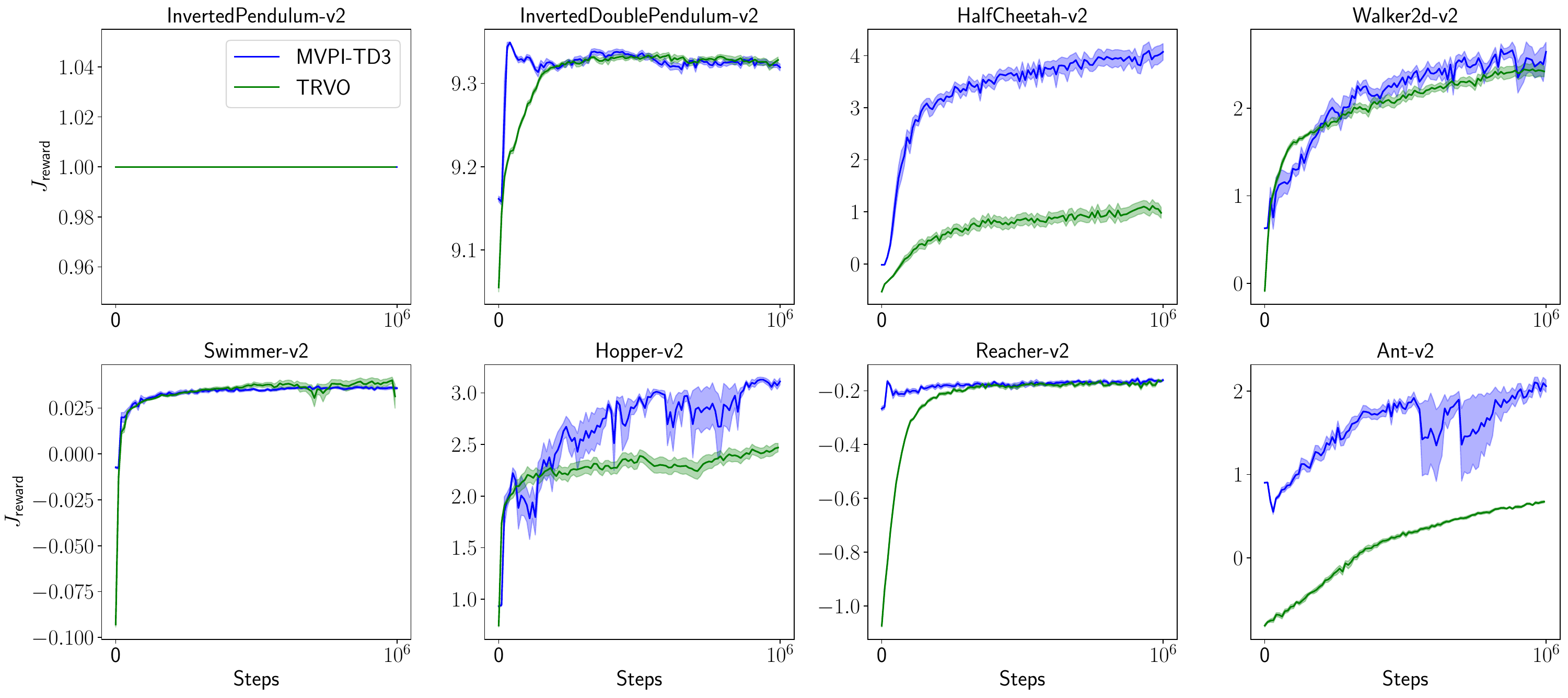}
\caption{\label{fig:mvpi-td3-per-step-reward-0.5} Comparing MVPI-TD3 and TRVO in terms of $J_\text{reward}$ with $\lambda = 0.5$}
\end{figure}

\begin{figure}[h]
\centering
\includegraphics[width=0.8\textwidth]{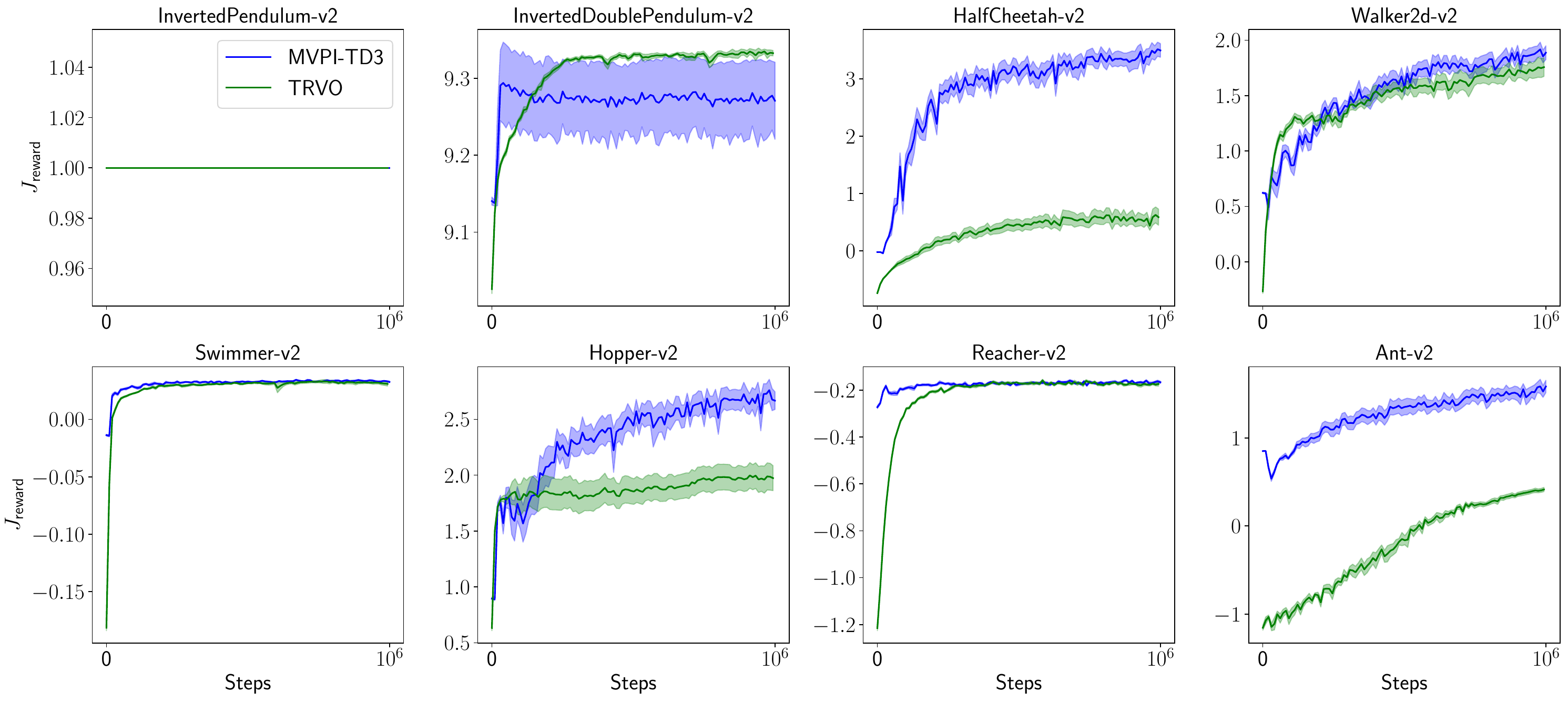}
\caption{\label{fig:mvpi-td3-per-step-reward-1} Comparing MVPI-TD3 and TRVO in terms of $J_\text{reward}$ with $\lambda = 1$}
\end{figure}

\begin{figure}[h]
\centering
\includegraphics[width=0.8\textwidth]{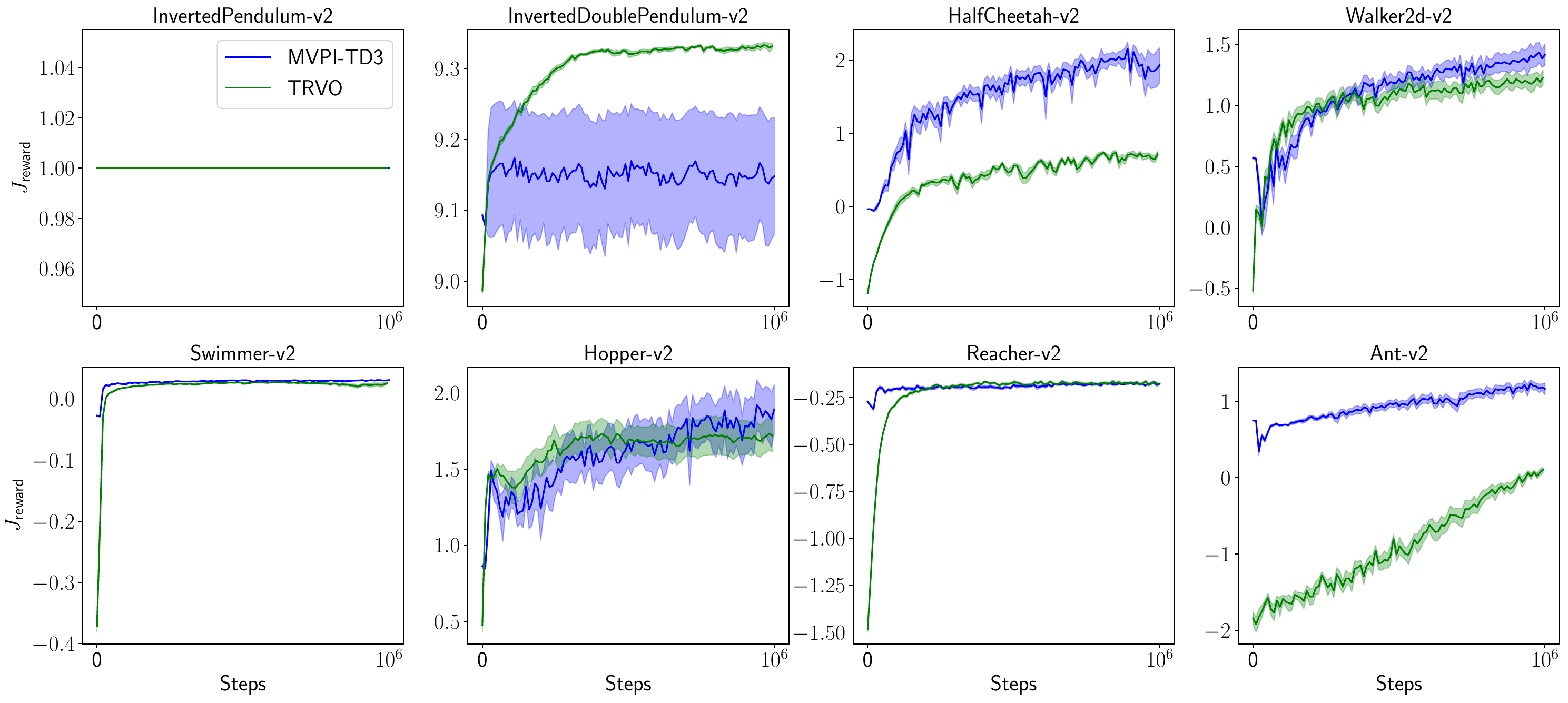}
\caption{\label{fig:mvpi-td3-per-step-reward-2} Comparing MVPI-TD3 and TRVO in terms of $J_\text{reward}$ with $\lambda = 2$}
\end{figure}

\end{document}